\newcommand{\HmyT}{{\hat{\hhc{C}}_{f,n}}}
\newcommand{\M}{\mathcal{M}} 
\newcommand{\R}{\mathbb R}
\newcommand{\T}{{\hhc{C}_f}}
\newcommand{\X}{\mathcal X}
\newcommand{\eps}{\varepsilon}
\newcommand{\fe}{\mathcal{E}}
\newcommand{\hatT}{{\hhc{\hat C}_{f,n}}}
\newcommand{\hc}{\mathcal}
\newcommand{\hhc}{\mathsf}
\newcommand{\knminus}{\frac{k}{n} - \frac{C_\delta}{n}\sqrt{k d \log n}}
\newcommand{\knplus}{\frac{k}{n} + \frac{C_\delta}{n}\sqrt{k d \log n}}
\newcommand{\myT}{T^\myr_n} 
\newcommand{\mygamma}{\Gamma}
\newcommand{\mylip}{\mathsf{c}}
\newcommand{\myr}{{\mathsf{r}}}
\newcommand{\optT}{{\hhc{C}_f}}
\newcommand{\splittree}{split-cluster tree}
\newcommand{\suplev}[1][\lambda]{\{x \in \X : f(x) \geq #1\}}
\newtheorem{definition}{Definition}
\newtheorem{lemma}{Lemma}
\newtheorem{theorem}{Theorem}
\title{
    Beyond Hartigan Consistency:\\ 
    Merge Distortion Metric for Hierarchical Clustering
    }
\author{
    Justin Eldridge\\
    \small\texttt{eldridge@cse.ohio-state.edu}
    \and
    Mikhail Belkin\\
    \small\texttt{mbelkin@cse.ohio-state.edu}
    \and
    Yusu Wang\\
    \small\texttt{yusu@cse.ohio-state.edu}
    }
\date{}
\begin{document}

\maketitle

\begin{abstract} 

Hierarchical clustering is a popular method for analyzing data which associates
a tree to a dataset. Hartigan consistency has been used extensively as a
framework to analyze such clustering algorithms from a statistical point of
view. Still, as we show in the paper, a tree which is Hartigan consistent with a
given density can look very different than the correct limit tree. Specifically,
Hartigan consistency permits two types of undesirable configurations which we
term \emph{over-segmentation} and \emph{improper nesting}.  Moreover, Hartigan
consistency is a limit property and does not directly quantify difference
between trees.

In this paper we identify two limit properties, \emph{separation} and
\emph{minimality}, which address both over-segmentation and improper nesting and
together imply (but are not implied by) Hartigan consistency. We proceed to
introduce a \emph{merge distortion metric} between hierarchical clusterings and
show that convergence in our distance implies both separation and minimality. We
also prove that uniform separation and minimality imply convergence in the merge
distortion metric.  Furthermore, we show that our merge distortion metric is
stable under perturbations of the density.

Finally, we demonstrate applicability of these concepts by proving convergence
results for two clustering algorithms.   First, we show convergence (and hence
separation and minimality) of the recent robust single linkage algorithm of
\cite{chaudhuri_2010}. Second, we provide convergence results on
manifolds for  topological  split tree clustering.
\end{abstract} 

\section{Introduction}

Hierarchical clustering is an important class of techniques and algorithms for
representing data in terms of a certain tree structure~\citep{jain_1988}. When
data are sampled from a probability distribution, one needs to study the
relationship between trees obtained from data samples to the infinite tree of
the underlying probability density. This question was first explored in
\citet{hartigan_1975}, which introduced the notion of \emph{high-density
clusters}. Specifically, given density function $f : \X \to \R$, the
high-density clusters are defined to be the connected components of $\suplev$
for some $\lambda$. The set of all clusters forms a hierarchical structure known
as the \emph{density cluster tree} of $f$. The natural notion of consistency for
finite density estimators is is to require that any two high density clusters
are also separate in the finite tree given enough samples.  This notion was
introduced in \citet{hartigan_1981} and is known as {\it Hartigan consistency}.
Still, while clearly desirable, it is well known that Hartigan consistency does
not fully capture the properties of convergence that one would \emph{a priori}
expect. In particular, it does not exclude trees which are very different from
the underlying probability distribution. 

In this paper we identify two distinct undesirable configuration types permitted
by Hartigan consistency, \emph{over-segmentation} \cite[identified as the
problem of \emph{false clusters} in][]{chaudhuri_2014} and \emph{improper
nesting}, and show how both of these result from clusters merging at the wrong
level. To address these issues we propose two basic properties for hierarchical
cluster convergence: \emph{minimality} and \emph{separation}. Together they
imply Hartigan consistency and, furthermore, rule out ``improper"
configurations.  We proceed to introduce a \emph{merge distortion} metric on
clustering trees and show that convergence in the metric implies both separation
and minimality. Moreover, we demonstrate that uniform versions of these
properties are in fact equivalent to metric convergence.  We note that the
introduction of a quantifiable \emph{merge distortion} metric also addresses
another issue with Hartigan consistency, which is a limit property of clustering
algorithms and is not  quantifiable as such. We also prove that the merge
distortion metric is robust to small perturbations of the density.

Still,  attempts to formulate some intuitively desirable properties of
clustering have led to well-known impossibility results, such as those proven
by~\citet{kleinberg_2003}.  In order to show that our definitions correspond to
actual objects, and, furthermore, to realistic algorithms, we analyze the
robust single linkage clustering proposed by~\citet{chaudhuri_2010}.  We prove
convergence of that algorithm under our merge distortion metric and hence show
that it satisfies separation and minimality conditions.  We also  propose  a
topological split tree algorithm for hierarchical clustering (based on the
algorithm introduced by~\cite{CGOS13} for flat clustering) and demonstrate  its
convergence on Riemannian manifolds.

\paragraph{Previous work.} The problem of devising an algorithm which provably
converges to the true density cluster tree in the sense of Hartigan has a long
history.  \cite{hartigan_1981} proved that single linkage clustering is
\emph{not} consistent in dimensions larger than one. Previous to this,
\citet{wishart_1969} had introduced a more robust version of single linkage, but
its consistency had not been known. \citet{stuetzle_2010} introduced another
generalization of single-linkage designed to estimate the density cluster tree,
but again consistency was not established. Recently, however, two distinct
consistent algorithms have been introduced: The robust single linkage algorithm
of \citet{chaudhuri_2010}, and the tree pruning method of \cite{kpotufe_2011}.
Both algorithms are analyzed together, along with a pruning extension, in
\citet{chaudhuri_2014}. The robust single linkage algorithm was generalized in
\citet{balakrishnan_2013} to densities supported on a Riemannian submanifold of
$\R^d$.  We analyze the algorithm of \citet{chaudhuri_2010} in
Section~\ref{section:chaudhuri}. \cite{chaudhuri_2010} provide several theorems
which make precise the sense in which clusters are connected and separated at
each step of the robust single linkage algorithm. This paper translates their
results to our formalism, thereby proving that robust single linkage converges
to the density cluster tree in the merge distortion metric.

A central contribution of this paper will be to introduce notions which extend
Hartigan consistency, and are desirable properties of any algorithm which
estimates the density cluster tree.  In a related direction,
\citet{kleinberg_2003} outlined three desirable properties of a clustering
method, and proved that no method satisfying all three exists.
\cite{Ben-David_Ackerman_2009} argued that the impossibility result of Kleinberg
is tied to his formalism, and showed that axioms similar to his can be
made consistent by axiomatizing clustering quality measures as opposed to
clustering functions themselves.  \cite{Zadeh_Ben-David_2009} and
\cite{Ackerman_Ben-David_Loker_2010} presented axiomatic characterizations of
linkage-based clustering algorithms. Similarly, \cite{facundo_2010} introduced
\emph{functoriality} as one of three axioms related to Kleinberg's and showed
that single linkage agglomerative clustering is the only method which
simultaneously satisfies each.


\section{Preliminaries and definitions} 

A \emph{clustering} $\hc C$ of a set $X$ is the organization of its elements
into a collection of subsets of $X$ called \emph{clusters}. In general, clusters
may overlap or be disjoint. If the collection of clusters exhibits nesting
behavior (to be made precise below), the clustering is called
\emph{hierarchical}.  The nesting property permits us to think of a hierarchical
clustering as a tree of clusters, henceforth called a \emph{cluster tree}.  
\begin{definition}[Cluster tree]
\label{def:cluster_tree}
A \emph{cluster tree} (hierarchical clustering) of a set $X$ is a collection
$\hc{C}$ of subsets of $X$ s.t. $X \in \hc{C}$ and
 $\hc{C}$ has hierarchical structure. 
That is, if $C,C' \in \hc{C}$
        such that $C \neq C'$, then $C \cap C' = \emptyset$, or $C \subset C'$
        or $C' \subset C$.
Each element $C$ of $\hc{C}$ is called a \emph{cluster}. Each cluster $C$ is a
node in the tree. The \emph{descendants} of $C$ are those clusters $C' \in
\hc{C}$ such that $C' \subset C$. Every cluster in the tree except for $X$
itself is a descendant of $X$, hence $X$ is called the \emph{root} of the
cluster tree.
\end{definition}
Note that our definition of a cluster tree does not assume that either the set
of objects $X$ or the collection of clusters $\hc{C}$ are finite or even
countable. Hierarchical clustering is commonly formulated as a sequence of
nested partitions of $X$ \citep[see]{jain_1988}, culminating in the partition of
$X$ into singleton clusters. Our formulation differs in that it is a sequence of
nested partitions of \emph{subsets} of $X$. Notably, we don't impose the
requirement that $\{x\}$ appear as a cluster for every $x$.

Given a density $f$ supported on $\X \subset \R^d$, a natural way to cluster
$\X$ is into regions of high density. \citet{hartigan_1975} made this notion
precise by defining a \emph{high-density cluster} of $f$ to be a connected
component of the superlevel set $\{f \geq \lambda\} := \suplev$ for any $\lambda
\geq 0$. It is clear that this clustering exhibits the nesting property: If $C$
is a connected component of $\{ f \geq \lambda \}$, and $C'$ is a connected
component of $\{ f \geq \lambda' \}$, then either $C \subseteq C'$, $C'
\subseteq C$, or $C \cap C' = \emptyset$. We can therefore interpret the set of
all high-density clusters of a density $f$ as a cluster tree:

\begin{definition}[Density cluster tree of $f$]
\label{def:high_density_cluster_tree}
Let $\X \subset \R^d$ and consider any $f : \X \to \R$.  The \emph{density
cluster tree} of $f$, written $\hc{C}_f$, is the cluster tree whose nodes
(clusters) are the connected components of $\suplev$ for some $\lambda \geq 0$.
\end{definition}
We note that the density cluster tree of $f$ is closely related to the so-called
\emph{split tree} studied in the  computational geometry and topology literature
as a variant of the \emph{contour tree}; see e.g, \citep{CSA03}. 
We discuss a split tree-based approach to estimating the density cluster tree
in Section~\ref{section:split_tree}.

In practice we do not have access to the true density $f$, but rather a finite
collection of samples $X_n \subset \X$ drawn from $f$. We may attempt to recover
the structure of the density cluster tree $\hc{C}_f$ by applying a hierarchical
clustering algorithm to the sample, producing a discrete cluster tree $\hc{\hat
C}_{f,n}$ whose clusters are subsets of $X_n$.  In order to discuss the sense in
which the discrete estimate $\hc{\hat C}_{f,n}$ is consistent with the density
cluster tree $\hc{C}_f$ in the limit $n \to \infty$, \citet{hartigan_1981}
introduced a notion of convergence which has since been referred to as
\emph{Hartigan consistency}.  We follow \citet{chaudhuri_2010} in defining
Hartigan consistency in terms of the density cluster tree:
\begin{definition}[Hartigan consistency]
\label{definition:hartigan}
Suppose a sample $X_n \subset \X$ of size $n$ is used to construct a cluster
tree $\hc{\hat C}_{f,n}$ that is an estimate of $\hc{C}_f$.  For any sets $A,A'
\subset \X$, let $A_n$ (respectively $A'_n$) denote the smallest cluster of
$\hc{\hat C}_{f,n}$ containing $A \cap X_n$ (respectively, $A' \cap X_n$). We
say $\hc{\hat C}_{f,n}$ is consistent if, whenever $A$ and $A'$ are different
connected components of $\suplev$ for some $\lambda > 0$, $\operatorname{Pr}(A_n
\text{ is disjoint from } A'_n) \to 1$ as $n \to \infty$.
\end{definition}

In what follows, it will be useful to talk about the ``height'' at which two
points in a clustering merge. To motivate our definition, consider the two
points $a$ and $a'$ which sit on the surface of the density depicted in
Figure~\ref{fig:merge_height}. Intuitively, $a$ sits at height $f(a)$ on the
surface, while $a'$ sits at $f(a')$. If we look at the superlevel set
$\{ f \geq f(a) \}$, we see that $a$ and $a'$ lie in two different high-density
clusters. As we sweep $\lambda < f(a)$, the disjoint components of $\{ f \geq
\lambda \}$ containing $a$ and $a'$ grow, until they merge at height $\mu$. We
therefore say that the \emph{merge height} of $a$ and $a'$ is $\mu$. 

\begin{figure}[t]
\centering
\includegraphics[scale=1.4]{./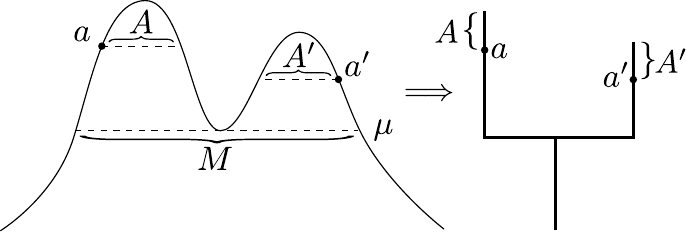}
\caption{\label{fig:merge_height} The density has a tree-like structure in which
$a$ and $a'$ merge at level $\mu$.}
\end{figure}

We may also interpret the situation depicted in Figure~\ref{fig:merge_height} in
the language of the density cluster tree. Let $A$ be the connected
component of $\{ f \geq f(a) \}$ which contains $a$, and let $A'$ be the
component of $\{ f \geq f(a') \}$ containing $a'$. Recognize that $A$ and $A'$
are nodes in the density cluster tree. As we walk the unique path from $A$
to the root, we eventually come across a node $M$ which contains both $a$ and
$a'$.  Note that $M$ is a connected component of the superlevel set $\{ f \geq
\mu \}$. It is desirable to assign a height to the entire cluster $M$, and a
natural choice is therefore $\mu$.

We extend this intuition to cluster trees which may not, in general, be
associated with a density $f$ by introducing the concept of a height function:

\begin{definition}[Cluster tree with height function]
A cluster tree with a height function is a triple $\hhc{C} = (X,
\hc{C}, h)$, where $X$ is a set of objects, $\hc{C}$ is a cluster tree of $X$,
and $h: X \to \R$ is a height function mapping each point in $X$ to a ``height".  
Furthermore, we define the height of a cluster $C \in \hc{C}$ to
be the lowest height of any point in the cluster. That is, $h(C) = \inf_{x \in
C} h(x)$.  Note that the nesting property of $\hc{C}$ implies that if $C'$ is a
descendant of $C$ in the cluster tree, then $h(C') \geq h(C)$.
\end{definition}

We will be consistent in using $\T$ to denote the density cluster tree of $f$
equipped with height function $f$. That is, $\T = (\X, \hc{C}_f, f)$.  Armed
with these definitions, we may precisely discuss the sense in which points --
and, by extension, clusters -- are connected at some level of a tree:

\begin{definition}
Let $\hhc{C} = (X, \hc{C}, h)$ be a hierarchical clustering of $X$ equipped with
height function $h$. 
\begin{enumerate}
\item Let $x,x' \in X$. We say that $x$ and $x'$ are \emph{connected at level
    $\lambda$} if there exists a $C \in \hc{C}$ with $x,x \in C$ such that $h(C)
    \geq \lambda$. Otherwise, $x$ and $x'$ are \emph{separated at level
    $\lambda$}. 
\item A subset $S\subset X$ is \emph{connected at level $\lambda$} if for any
    $s,s' \in S$, $s$ and $s'$ are connected at level $\lambda$.
\item Let $S \subset X$ and $S' \subset X$. We say that $S$ and $S'$ are
    \emph{separated at level $\lambda$} if for any $s \in S$, $s' \in S'$, $s$
    and $s'$ are separated at level $\lambda$.
\end{enumerate}
\end{definition}

We can now formalize the notion of \emph{merge height}:

\begin{definition}[Merge height]
\label{def:merge_height}
Let $\hhc{C} = (X, \hc{C}, h)$ be a hierarchical clustering equipped with a
height function. Let $x,x' \in X$, and suppose that $M$ is the smallest cluster
of $\hc{C}$ containing both $x$ and $x'$. That is, if $M' \in \hc{C}$ is a
proper sub-cluster of $M$, then $x \not \in M'$ or $x' \not \in M'$. We define
the \emph{merge height} of $x$ and $x'$ in $\hhc{C}$, written
$m_{\hhc{C}}(x,x')$, to be the height of the cluster $M$ in which the two points
merge, i.e., $m_{\hhc{C}}(x,x') = h(M)$. If $S \subset X$, we define the
\emph{merge height} of $S$ to be the $\inf_{(s,s') \in S \times S}
m_\hhc{C}(s,s')$.
\end{definition}

In what follows, we argue that a natural and advantageous definition of
convergence to the true density cluster tree is one which requires that,
for any two points $x,x'$, the merge height of $x$ and $x'$ in an estimate,
$m_{\hatT}(x,x')$, approaches the true merge height $m_{\T}(x,x')$ in the limit
$n \to \infty$.


\section{Notions of consistency}  

In this section we argue that while Hartigan consistency is a desirable
property, it is not sufficient to guarantee that an estimate captures the true
cluster tree in a sense that matches our intuition. We first illustrate the
issue by giving an example in which an algorithm is Hartigan consistent, yet
produces results which are very different from the true cluster tree.  We then
introduce a new, stronger notion of consistency which directly addresses the
weaknesses of Hartigan's definition.

\paragraph{The insufficiency of Hartigan consistency.}

An algorithm which is Hartigan consistent can nevertheless produce results
which are quite different than the true cluster tree.
Figure~\ref{fig:hartigan_problems} illustrates the issue.
Figure~\ref{fig:hartigan_problems}(a) depicts a two-peaked density $f$ from
which the finite sample $X_n$ is drawn. The two disjoint clusters $A$ and $B$
are also shown. The two trees to the right represent possible outputs of
clustering algorithms attempting to recover the hierarchical structure of $f$.
Figure~\ref{fig:hartigan_problems}(b) depicts what we would intuitively
consider to be an ideal clustering of $X_n$, whereas
Figure~\ref{fig:hartigan_problems}(c) shows an undesirable clustering which
does not match our intuition behind the density cluster tree of $f$.

First, note that while the two clusterings are very different, both satisfy
Hartigan consistency. Hartigan's notion requires only separation: The smallest
empirical cluster containing $A \cap X_n$ must be disjoint from the smallest
empirical cluster containing $B \cap X_n$ in the limit. The smallest empirical
cluster containing $A \cap X_n$ in the undesirable clustering is $A_n := \{x_2,
a_1, a_2, a_3\}$, whereas the smallest containing $B \cap X_n$ is $B_n :=
\{x_1, b_1, b_2, b_3\}$. $A_n$ and $B_n$ are clearly disjoint, and so Hartigan
consistency is not violated. In fact, the undesirable tree separates any pair
of disjoint clusters of $f$, and therefore represents a possible output of an
algorithm which is Hartigan consistent despite being quite different from
the true tree.

\begin{figure}
    \centering
    \includegraphics[scale=1]{./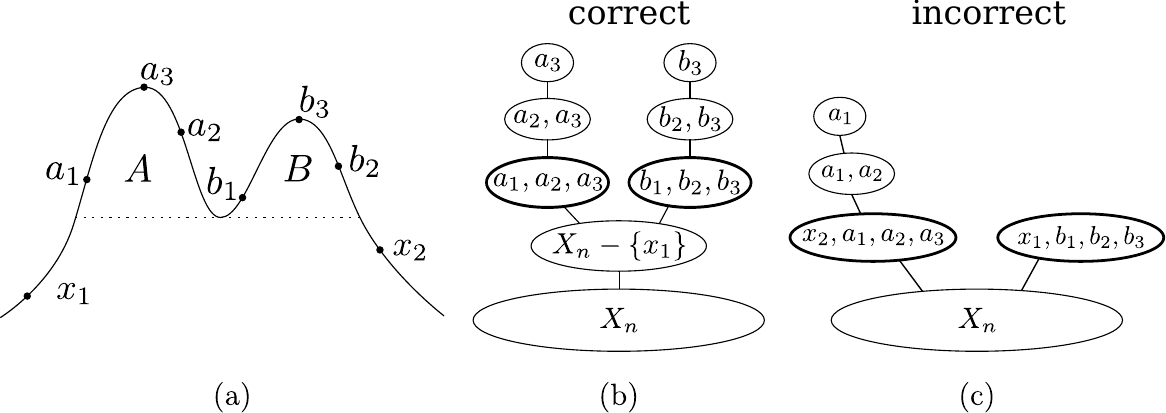}
    \caption{\label{fig:hartigan_problems} (c) is Hartigan consistent, yet looks
    rather different than the true tree.}
\end{figure}

We will show that the undesirable configurations of
Figure~\ref{fig:hartigan_problems}(c) arise because Hartigan consistency does
not place strong demands on the level at which a cluster should be connected.
Consider a cluster $A$ occurring at level $\lambda$ of the true density, and
let $A_n$ be the smallest empirical cluster containing all of $A \cap X_n$. In
the ideal case, an algorithm would perfectly recover $A$ such that $A_n = A
\cap X_n$. It is much more likely, however, that $A_n$ contains ``extra''
points from outside of $A$. Hartigan consistency places one constraint on the
nature of these extra points: They may not belong to some other disjoint
cluster of $f$. However, Hartigan's notion allows $A_n$ to contain points from
clusters which are \emph{not} disjoint from $A$. By their nature, these points
must be of density less than $\lambda$. If $A_n$ contains such extra points,
then $A \cap X_n$ is \emph{separated} at level $\lambda$, and in fact only
becomes connected at level $\min_{a \in A_n} f(a) < \delta$. Therefore,
permitting $A \cap X_n$ to become connected at a level lower than $\lambda$ is
equivalent to allowing ``extra'' points of density $< \lambda$ to be contained
within $A_n$.

The undesirable configurations depicted in
Figure~\ref{fig:hartigan_problems}(c) can be divided into two distinct
categories, which we term \emph{over-segmentation} and \emph{improper nesting}.
Either of these issues may exist independently of the other, and both are 
symptoms of allowing clusters to become connected at lower levels than what is
appropriate.

\emph{Over-segmentation} occurs when an algorithm fragments a true cluster,
returning empirical clusters which are disjoint at level $\lambda$ but are in
actuality part of the same connected component of $\{ f \geq \lambda \}$.  The
problem is recognized in the literature by \cite{chaudhuri_2014}, who refer to
it as the presence of \emph{false clusters}.
Figure~\ref{fig:hartigan_problems}(c) demonstrates over-segmentation by
including the clusters $A_n := \{x_2, a_1, a_2, a_3\}$ and $B_n := \{x_1, b_1,
b_2, b_3\}$. $A_n$ and $B_n$ are disjoint at level $f(x_1)$, though both are in
actuality contained within the same connected component of $\{f \geq f(x_1)\}$.

It is clear that over-segmentation is a direct result of clusters connecting at
the incorrect level. The severity of the issue is determined by the difference
between the levels at which the cluster connects in the density cluster tree
and the estimate. That is, if $A$ is connected at $\lambda$ in the density
cluster tree, but $A \cap X_n$ is only connected at $\lambda - \delta$ in the
empirical clustering, then the larger $\delta$ the greater the extent to which
$A$ is over-segmented.

\emph{Improper nesting} occurs when an empirical cluster $C_n$ is the smallest
cluster containing a point $x$, and $f(x) > \min_{c \in C_n} f(c)$. The
clustering in Figure~\ref{fig:hartigan_problems}(c) displays two instances of
improper nesting.  First, the left branch of the cluster tree has improperly
nested the cluster $\{a_1, a_2\}$, as it is the smallest cluster containing
$a_2$, yet $f(a_1) < f(a_2)$. The right branch of the same tree has also been
improperly nested in a decidedly ``lazier'' fashion: the cluster $\{x_1, b_1,
b_2, b_3\}$ is the smallest empirical cluster containing each of $b_1$, $b_2$,
and $b_3$, despite each being of density greater than $f(x_1)$.  Improper
nesting is considered undesirable because it breaks the intuition we have about
the containment of clusters in the density cluster tree; Namely, if $A \subset
A'$ and $a \in A$, $a' \in A'$, then $f(a) \geq f(a')$.

Note that like over-segmentation, improper nesting is caused by a cluster
becoming connected at a lower level than is appropriate. For instance, suppose
$C_n$ is improperly nested; That is, it is the smallest empirical cluster
containing some point $x$ such that $f(x) > \min_{c \in C_n} f(c)$. Let $\tilde
C$ be the connected component of $\{ f \geq f(x) \}$, and let $\tilde C_n$ be
the smallest empirical cluster containing all of $\tilde C \cap X_n$. Then $C_n
\subset \tilde C_n$ such that $\min_{c \in \tilde C_n} f(c) < f(x)$. In other
words, $\tilde C \cap X_n$ is connected only below $f(x)$.

As previously mentioned, it is not reasonable to demand that a cluster $A$
be perfectly recovered by a clustering algorithm. Rather, if $A$ is connected
at level $\lambda$ in the density cluster tree, we should allow $A \cap X_n$
to be first connected at a level $\lambda - \delta$ in the estimate, for some
small positive $\delta$. We make this notion precise with the following
definition:
\begin{definition}[$\delta$-minimal]
Let $A$ be a connected component of $\suplev$, and let $\hatT$ be an estimate
of the density cluster tree of $f$ computed from finite sample $X_n$.  $A$ is
\emph{$\delta$-minimal} in $\hatT$ if $A \cap X_n$ is connected at level
$\lambda - \delta$ in $\hatT$.
\end{definition}

Intuitively, each cluster of the density cluster tree should be
$\delta$-minimal in an empirical clustering for as small of a $\delta$ as
possible. For example, take any sample $x \in X_n$ and let $C$ be the connected
component of $\{ f \geq f(x) \}$ containing $x$.  Some examination shows that
$C$ is $0$-minimal in the ideal clustering depicted in
Figure~\ref{fig:hartigan_problems}(b). As the ideal clustering is free from
over-segmentation and improper nesting, it stands to reason that a cluster can
only exhibit these issues to the extent that it is $\delta$-minimal; The larger
$\delta$, the more severely a cluster may be over-segmented or improperly
nested.

\paragraph{Minimality and separation.}

We have identified two senses -- over-segmentation and improper nesting -- in
which a hierarchical clustering method can produce results which are
inconsistent with the density cluster tree, but which are not prevented by
Hartigan consistency. We have shown that both are symptoms of clusters becoming
connected at the improper level, and argued that the extent to which a cluster
is $\delta$-minimal controls the amount in which it is over-segmented or
improperly nested. With more and more samples, we'd like the extent to which a
clustering exhibits over-segmentation and improper nesting to shrink to zero.
We therefore introduce a notion of consistency which requires any cluster to be
$\delta$-minimal with $\delta \to 0$ as $n \to \infty$.

In the following, suppose a sample $X_n \subset \X$ of size $n$ is used to
construct a cluster tree $\hc{\hat C}_{f,n}$ that is an
estimate of $\hc{C}_{f,n}$, and let $\hatT$ be $\hc{\hat C}_{f,n}$ equipped with
$f$ as height function. Furthermore, it is assumed that each definition holds
with probability approaching one as $n \to \infty$.

\begin{definition}[Minimality]
\label{def:minimality}
We say that $\hatT$ ensures \emph{minimality} if given any connected component
$A$ of the superlevel set $\suplev$ for some $\lambda > 0$, $A \cap X_n$ is
connected at level $\lambda - \delta$ in $\hatT$ for any $\delta > 0$ as $n \to
\infty$.
\end{definition}

Minimality concerns the level at which a cluster is connected -- it says nothing
about the ability of an algorithm to distinguish pairs of disjoint clusters. For
this, we must complement minimality with an additional notion of consistency
which ensures separation. Hartigan consistency is sufficient, but does not
explicitly address the level at which two clusters are separated. We will
therefore introduce a slightly different notion, which we term
\emph{separation}:

\begin{definition}[Separation]
\label{def:separation}
We say that $\hatT$ ensures \emph{separation} if when $A$ and
$B$ are two disjoint connected components of $\{ f \geq \lambda \}$
merging at $\mu = m_{\T}(A \cup B)$, $A \cap X_n$ and $B \cap X_n$ are
separated at level $\mu + \delta$ in $\hatT$ for any $\delta > 0$ as $n \to
\infty$.
\end{definition}

It is interesting to note that Hartigan consistency contains some weak notion of
connectedness, as it requires the two sets $A \cap X_n$ and $B \cap X_n$ to be
connected into clusters $A_n$ and $B_n$ at the same level at which they are
separated. Our notion only requires that $A \cap X_n$ and $B \cap X_n$ be
disjoint at this level.  We ``factor out'' Hartigan consistency's idea of
connectedness, leaving separation, and replace it with a stronger notion of
minimality.

Taken together, minimality and separation imply Hartigan consistency.\\[-1.5em]

\begin{theorem}[Minimality and separation $\Longrightarrow$ Hartigan
consistency]
If a hierarchical clustering method ensures both separation and minimality,
then it is Hartigan consistent.
\end{theorem}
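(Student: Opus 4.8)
The plan is to extract the two ingredients of Hartigan consistency — that in the limit the smallest empirical cluster $A_n$ containing $A \cap X_n$ is disjoint from the smallest empirical cluster $A'_n$ containing $A' \cap X_n$ — from the two hypotheses, using the common height function $f$ as the bridge between minimality and separation. Fix distinct connected components $A,A'$ of $\{f \ge \lambda\}$ for some $\lambda>0$, and let $\mu = m_{\T}(A\cup A')$ be their merge height in the true density cluster tree. The first thing I would record is that $\mu<\lambda$ strictly: if a cluster $M$ of $\T$ contained a point $a\in A$ and a point $a'\in A'$ with $h(M)\ge\lambda$, then $M$ would be a connected subset of $\{f\ge\lambda\}$ meeting two of its distinct connected components, which is impossible; hence every cross-pair has $m_{\T}(a,a')<\lambda$, and therefore $\mu<\lambda$. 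I then choose $\delta$ with $0<\delta<(\lambda-\mu)/2$, so that $\lambda-\delta>\mu+\delta$, and note that both hypotheses hold for this $\delta$ with probability tending to one.

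Next I would use minimality to show $h(A_n)\ge\lambda-\delta$ and $h(A'_n)\ge\lambda-\delta$. Minimality gives only the pairwise statement that $A\cap X_n$ is connected at level $\lambda-\delta$; to upgrade this to a single enclosing cluster I would use finiteness of $X_n$ together with the nesting property: fix a base point $a_0\in A\cap X_n$, pick for each $s\in A\cap X_n$ a cluster $C_s\ni a_0,s$ with $h(C_s)\ge\lambda-\delta$, observe that the finitely many $C_s$ all contain $a_0$ and hence form a chain under inclusion, and take its maximal element $C^\ast$; then $A\cap X_n\subseteq C^\ast$, so $A_n\subseteq C^\ast$ and $h(A_n)\ge h(C^\ast)\ge\lambda-\delta$. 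The same argument applied to $A'$ gives $h(A'_n)\ge\lambda-\delta$. (This step presumes $A\cap X_n$ and $A'\cap X_n$ are nonempty, which happens with probability tending to one whenever $A$ and $A'$ carry positive $f$-mass; the degenerate zero-mass case I would dismiss separately or exclude by a mild regularity assumption on $f$.)

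The final step combines this with separation. Separation tells us $A\cap X_n$ and $A'\cap X_n$ are separated at level $\mu+\delta$. Suppose $A_n$ and $A'_n$ were not disjoint; by the nesting property one is contained in the other, say $A_n\subseteq A'_n$. Then $A'_n$ contains both $A\cap X_n$ and $A'\cap X_n$, so for any $a\in A\cap X_n$ and $a'\in A'\cap X_n$ the cluster $A'_n$ shows that $a$ and $a'$ are connected at level $h(A'_n)\ge\lambda-\delta>\mu+\delta$, hence connected at level $\mu+\delta$ — contradicting separation. Thus $A_n$ and $A'_n$ are disjoint; since this conclusion rests on minimality, separation, and nonemptiness of the two samples, each of which holds with probability tending to one, disjointness holds with probability tending to one, which is precisely Hartigan consistency.

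The main obstacle I anticipate is the ``upgrade'' in the second step — turning the pairwise definition of a set being connected at a level into an actual cluster of that height containing the whole set — since that is the one place where the combinatorics (nesting plus finiteness of the sample) genuinely does the work; the remainder is bookkeeping with $\delta$ and with the inclusions among $A_n$, $A'_n$, and the witnessing clusters. I would also take care to nail down $\mu<\lambda$ strictly at the outset, as the argument leaves no slack if $A$ and $A'$ could conceivably merge exactly at level $\lambda$.
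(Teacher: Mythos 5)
Your proof is correct and follows essentially the same route as the paper's: pick $\delta$ small relative to $\lambda-\mu$, use minimality to force the smallest enclosing clusters $A_n, A'_n$ to sit above level $\mu+\delta$, and use the nesting property to turn any intersection of $A_n$ and $A'_n$ into a single cluster containing a cross pair at that height, contradicting separation. The only difference is that you spell out two steps the paper leaves implicit (the strict inequality $\mu<\lambda$ and the chain/finiteness argument upgrading pairwise connectedness to a height bound on the smallest enclosing cluster), which is a faithful filling-in rather than a different approach.
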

\begin{proof}
Let $A$ and $A'$ be disjoint connected components of the superlevel set
$\suplev$ merging at level $\mu$. Pick any $\lambda - \mu > \delta > 0$.
Definitions~\ref{def:minimality}~and~\ref{def:separation} imply that there
exists an $N$ such that for all $n \geq N$, $A \cap X_n$ and $A' \cap X_n$ are
separated and individually connected at level $\mu + \delta$. Assume $n \geq N$.
Let $A_n$ be the smallest cluster containing all of $A \cap X_n$, and $A'_n$ be
the smallest cluster containing all of $A' \cap X_n$. Suppose for a
contradiction that there is some $x \in X_n$ such that $x \in A_n \cap A_n'$.
Then either $A_n \subset A_n'$ or $A_n' \subset A_n$. In either case, there is
some cluster $C$ such that $h(C) \geq \mu + \delta$, $A_n \subset C$, and $A_n'
\subset C$. Since $A \cap X_n \subset A_n$ and $A' \cap X_n \subset A_n'$, this
contradicts the assumption that $A \cap X_n$ and $A' \cap X_n$ are separated at
level $\mu + \delta$. Hence $A_n \cap A_n' = \emptyset$.
\end{proof}

Minimality and separation have been defined as properties which are true for all
clusters in the limit. In addition, we may define stronger versions of these
concepts which require that all clusters approach minimality and separation
uniformly:

\begin{definition}[Uniform minimality and separation]
\label{def:uniform_connectedness_and_separation}
$\hatT$ ensures \emph{uniform minimality} if given any $\delta > 0$
there exists an $N$ depending only on $\delta$ such that for all $n \geq N$ and
all $\lambda$, any cluster $A \in \suplev$ is connected at level $\lambda -
\delta$. $\hatT$ is said to ensure \emph{uniform separation} if given any
$\delta > 0$ there exists an $N$ depending only on $\delta$ such that for all $n
\geq N$ and all $\mu$, any two disjoint connected components merging in
$\suplev[\mu]$ are separated at level $\mu + \delta$.
\end{definition}

The uniform versions of minimality and separation are equivalent to the weaker
versions under some assumptions on the density. The proof of the following
theorem is given in Appendix~\ref{apx:proof:nice_function_implies_uniform}.

\begin{theorem}
\label{thm:nice_function_implies_uniform}
If the density $f$ is bounded from above and is such that $\suplev$ contains
finitely many connected components for any $\lambda$, then any algorithm which
ensures minimality also ensures uniform minimality on $f$, and any algorithm
which ensures separation also ensures uniform separation.
\end{theorem}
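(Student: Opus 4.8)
The plan is to prove both halves by the same device: replace the continuum of levels by a \emph{finite} grid, use the finiteness hypothesis to see that only finitely many clusters live on the grid, apply the non-uniform hypothesis (minimality or separation) to each of those finitely many clusters or pairs to extract finitely many sample-size thresholds, and take their maximum. A maximum of finitely many finite thresholds is finite and depends only on $\delta$ and $f$, which is exactly what uniformity asks for; and since the paper's notions hold ``with probability $\to 1$'', the passage from ``each of finitely many events'' to ``all of them simultaneously'' is a union bound over that finite collection.

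For uniform minimality, fix $\delta>0$ and put $\lambda_i=i\delta/2$ for $i=1,2,\dots$ with $\lambda_i\le\sup f$; this is a finite set because $f$ is bounded above. Each $\{f\ge\lambda_i\}$ has finitely many connected components by hypothesis; apply minimality to each such component with parameter $\delta/2$ to get a threshold, and let $N$ be the maximum of all of them. Given an arbitrary $\lambda$ and component $A$ of $\{f\ge\lambda\}$: if $\lambda\le\delta$ the claim is immediate, since the root $X_n$ contains $A\cap X_n$ and has height $\inf_{x\in X_n}f(x)\ge 0\ge\lambda-\delta$; otherwise let $\lambda_i$ be the largest grid point with $\lambda_i\le\lambda$, so $\lambda-\delta/2<\lambda_i\le\lambda$. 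Because $\lambda_i\le\lambda$, we have $\{f\ge\lambda\}\subseteq\{f\ge\lambda_i\}$, so $A$ lies inside some component $A_{i,j}$ of the latter and $A\cap X_n\subseteq A_{i,j}\cap X_n$; for $n\ge N$ there is a cluster $C$ with $A_{i,j}\cap X_n\subseteq C$ and $h(C)\ge\lambda_i-\delta/2>\lambda-\delta$, and this same $C$ witnesses that $A\cap X_n$ is connected at level $\lambda-\delta$.

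For uniform separation, fix $\delta>0$ and use the same grid. For every grid level $\nu$ and every pair $(P,Q)$ of distinct components of $\{f\ge\nu\}$ --- again finitely many --- apply separation to $(P,Q)$ with parameter $\delta/2$ to get a threshold, and let $N$ be their maximum. Take disjoint components $A,B$ of $\{f\ge\lambda\}$ merging at $\mu=m_{\T}(A\cup B)$. If $\mu+\delta>\sup f$ they are separated at level $\mu+\delta$ for free, since every cluster of $\hatT$ has height at most $\sup f$; so assume $\mu+\delta\le\sup f$, and let $\nu$ be the smallest grid point with $\nu>\mu$, so that $\mu<\nu\le\mu+\delta/2<\sup f$. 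Fix $a\in A\cap X_n$ and $b\in B\cap X_n$. If $f(a)<\nu$ (or symmetrically $f(b)<\nu$) then $m_{\hatT}(a,b)\le f(a)<\nu\le\mu+\delta/2<\mu+\delta$, so $a$ and $b$ are separated at level $\mu+\delta$. Otherwise $f(a),f(b)\ge\nu$, so $a$ lies in the component $P$ of $\{f\ge\nu\}$ containing it (which meets $A$) and $b$ in the component $Q$ containing it (which meets $B$); a short tree-order argument shows $P\ne Q$ and $m_{\T}(P\cup Q)\le m_{\T}(a,b)=\mu$, so for $n\ge N$ the pair $P\cap X_n,Q\cap X_n$ is separated at level $m_{\T}(P\cup Q)+\delta/2\le\mu+\delta/2<\mu+\delta$, whence again $m_{\hatT}(a,b)<\mu+\delta$. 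Thus $A\cap X_n$ and $B\cap X_n$ are separated at level $\mu+\delta$, as required.

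The genuinely nontrivial point --- and where I expect to spend the most care --- is the separation step. The naive hope is to pass to a ``larger'' containing pair and invoke the non-uniform hypothesis there, but going to a \emph{higher} superlevel set produces \emph{smaller} components (contained in $A$ and $B$, not containing them), which would transfer separation in the wrong direction. The fix is the dichotomy on $f(a)$ versus $\nu$: points of density below $\nu$ are separated from everything at level $\mu+\delta$ for trivial height reasons and can be thrown away, while points of density at least $\nu$ get routed through one of the finitely many grid pairs --- and one must check that the grid pair they go through merges at a level $\le\mu$, so that separating it with slack only $\delta/2$ still beats $\mu+\delta$. Everything else is routine tree bookkeeping: a component of a higher superlevel set sits inside one of a lower superlevel set; a cluster's height is at most the density at any of its points; two distinct components of $\{f\ge\nu\}$ have merge height strictly below $\nu$; and the union bound to pass from finitely many high-probability events to their intersection.
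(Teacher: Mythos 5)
Your proposal is correct and takes essentially the same route as the paper: the same $\delta/2$-spaced grid of levels bounded using $f\le M$, finiteness of the components living on the grid, and a maximum over the resulting finitely many thresholds. The paper only writes out the minimality half and dismisses separation as ``following closely''; your separation half --- in particular the dichotomy on $f(a)$ versus the grid level $\nu$, which is exactly the care needed since separation does not transfer from the smaller components of a higher superlevel set --- is a correct completion of the omitted argument.
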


In the next section, we will introduce a distance between hierarchical
clusterings, and show that convergence in this metric implies these
consistency properties.


\section{Merge distortion metric} 

The previous section introduced the notions of  minimality and
separation, which are desirable properties for a hierarchical clustering
algorithm estimating the density cluster tree. Like Hartigan consistency,
minimality and separation are limit properties, and do not directly quantify the
disparity between the true density cluster tree and an estimate. We now
introduce a \emph{merge distortion metric} on cluster trees
(equipped with height functions) which will allow us to do just that. 

We make our definitions specifically so that convergence in the merge distortion
metric implies the desirable properties of minimality and separation.
Specifically, consider once again the density depicted in
Figure~\ref{fig:merge_height}. Suppose we run a cluster tree algorithm on a
finite sample $X_n$ drawn from $f$, obtaining a hierarchical clustering
$\hc{\hat C}_{f,n}$. Let $\hatT$ be this clustering equipped with $f$ as a
height function. We may then talk about the height at which two points merge in
$\hatT$, and of the level at which clusters are connected and separated in
$\hatT$. These are the concepts required to discuss minimality and separation.

Suppose that the algorithm ensures minimality and separation in the limit. What
can we say about the merge height of $a$ and $a'$ in $\hatT$ as $n \to \infty$?
First, minimality will suggest that $M \cap X_n$ be connected in $\hatT$ at
level $\mu - \delta$, with $\delta \to 0$, where $\mu$ is as it appears in
Figure~\ref{fig:merge_height}. This implies that the merge height of
$a$ and $a'$ is bounded below by $\mu - \delta$, with $\delta \to 0$. On the
other hand, separation implies that $A \cap X_n$ and $A' \cap X_n$ be separated
at level $\mu + \delta$, with $\delta \to 0$. Therefore the merge height of $a$
and $a'$ is bounded above by $\mu + \delta$, with $\delta \to 0$.  Hence in the
limit $n \to \infty$, the merge height of $a$ and $a'$ in $\hatT$, written
$m_\hatT(a,a')$, must converge to $\mu$, which is otherwise known as
$m_\T(a,a')$: the merge height of $a$ and $a'$ in the true density cluster tree.

With this as motivation, we'll work backwards, defining our distance between
clusterings in such a way that convergence in the metric implies that the merge
height between any two points in the estimated tree converges to the merge
height in the true density cluster tree. We'll then show that this entails
minimality and separation, as desired.

\paragraph{Merge distortion metric.}

Let $\hhc{C}_1 = (\X_1, \hc{C}_1, h_1)$ and $\hhc{C}_2 = (\X_2, \hc{C}_2, h_2)$
be two cluster trees equipped with height functions. Recall from
Definition~\ref{def:merge_height} that each cluster tree is associated with its
own merge height function which summarizes the level at which pairs of points
merge. We define the distance between $\hhc{C}_1$ and $\hhc{C}_2$ in terms of
the distortion between merge heights. In general, $\hhc{C}_1$ and
$\hhc{C}_2$ cluster different sets of objects, so we will
use the distortion with respect to a \emph{correspondence}\footnote{
    Recall that a correspondence $\gamma$ between sets $S$ and $S'$ is a subset
    of $S \times S'$ such that for $\forall s \in S, \exists s' \in S'$ such
    that $(s,s') \in \gamma$, and $\forall s' \in S', \exists s \in S$ such that
    $(s,s') \in \gamma$.
} between these sets.

\begin{definition}[Merge distortion metric]
Let $\hhc{C}_1 = (X_1, \hc{C}_1, h_1)$ and $\hhc{C}_2 = (X_2, \hc{C}_2, h_2)$ be
two hierarchical clusterings equipped with height functions. Let $S_1 \subset
X_1$ and $S_2 \subset X_2$. Let $\gamma \subset S_1 \times S_2$ be a
correspondence between $S_1$ and $S_2$. The merge distortion distance between
$\hhc{C}_1$ and $\hhc{C}_2$ with respect to $\gamma$ is defined as
\begin{equation*}
\label{eqn:correspondence_distance}
    d_\gamma(\hhc{C}_1, \hhc{C}_2) 
        = 
                \max_{(x_1,x_2), (x_1',x_2') \in \gamma}
                |m_{\hhc{C}_1}(x_1,x_1') - m_{\hhc{C}_2}(x_2,x_2')|.
\end{equation*}
\end{definition}

The above definition is related to the standard notion of the distortion of a
correspondence between two metric spaces \citep{burago_2001}.  We note that if
$X_1 = X_2$ and $\gamma$ is a correspondence between $X_1$ and $X_2$, then
$d_\gamma(\hhc{C}_1, \hhc{C_2}) = 0$ implies that $\hhc{C}_1 = \hhc{C}_2$ in the
sense that the two trees $\hc{C}_1$ and $\hc{C}_2$ are isomorphic and the height
function for corresponding nodes are identical. 

Now consider the special case of the distance between the true density cluster
tree $\T = (\X, \hc{C}_f, f)$ and a finite estimate. Suppose we run a
hierarchical clustering algorithm on a sample $X_n \subset \X$ of size $n$
drawn from $f$, obtaining a cluster tree $\hc{\hat C}_{f,n}$. Denote by $\hatT
= (X_n, \hc{\hat C}_{f,n}, f)$ the cluster tree equipped with height function
$f$.  Then the natural correspondence is induced by identity in $X_n$: That is,
$\hat \gamma_n = \{(x,x) : x \in X_n\}$.  We then define our notion of
convergence to the density cluster tree with respect to this correspondence:

\begin{definition}[Convergence to the density cluster tree]
\label{def:convergence}
We say that a sequence of cluster trees $\{\hatT\}$ converges to the high
density cluster tree $\T$ of $f$, written $\hatT \to \T$, if for any $\eps > 0$
there exists an $N$ such that for all $n \geq N$, $d_{\hat \gamma_n}(\hatT, \T)
< \eps$.
\end{definition}


\section{Properties of the merge distortion metric} 

We now prove various useful properties of our merge distortion metric. First,
we show that convergence in the distance implies both uniform minimality and
uniform separation. We then show that the converse is also true. We conclude by
discussing stability properties of the distance.

\begin{theorem}
$\hatT \to \T$ implies 1) uniform minimality and 2) uniform separation.
\end{theorem}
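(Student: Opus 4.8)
The plan is to prove both implications directly from the definition of convergence in the merge distortion metric, unwinding what a small bound on $d_{\hat\gamma_n}(\hatT, \T)$ says about individual merge heights. Fix $\delta > 0$. By Definition~\ref{def:convergence} applied with $\eps = \delta$, there is an $N$ (depending only on $\delta$) such that for all $n \geq N$ we have $d_{\hat\gamma_n}(\hatT, \T) < \delta$; since $\hat\gamma_n$ is the identity correspondence on $X_n$, this means $|m_\hatT(x,x') - m_\T(x,x')| < \delta$ for \emph{every} pair $x, x' \in X_n$ simultaneously. The whole argument then reduces to translating ``merge heights are $\delta$-close for all pairs'' into the two statements about connectedness and separatedness at shifted levels.

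For uniform minimality, I would take any $\lambda$ and any connected component $A$ of $\suplev$, and any two points $s, s' \in A \cap X_n$. In the true tree $\T$, the points $s$ and $s'$ both lie in the cluster $A$, which is a connected component of $\{f \geq \lambda\}$; hence the smallest cluster of $\hc{C}_f$ containing both is contained in $A$ (or equals it), so $m_\T(s,s') = f(M_{\T}) \geq f(A) \geq \lambda$. Wait — more carefully, the smallest cluster containing $s,s'$ is a sub-cluster of $A$, and all sub-clusters of $A$ have height $\geq h(A) \geq \lambda$, so indeed $m_\T(s,s') \geq \lambda$. By the metric bound, $m_\hatT(s,s') > m_\T(s,s') - \delta \geq \lambda - \delta$. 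This means there is a cluster $M$ of $\hc{\hat C}_{f,n}$ containing $s$ and $s'$ with $h(M) = m_\hatT(s,s') \geq \lambda - \delta$, i.e. $s$ and $s'$ are connected at level $\lambda - \delta$ in $\hatT$. Since $s,s'$ were arbitrary in $A \cap X_n$, the set $A \cap X_n$ is connected at level $\lambda - \delta$; since $\lambda$ and $A$ were arbitrary and $N$ depends only on $\delta$, this is exactly uniform minimality.

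For uniform separation, I would take any $\mu$ and any two disjoint connected components $A, B$ of $\suplev[\mu']$ (for some $\lambda'$) that merge in $\suplev[\mu]$, meaning $\mu = m_\T(A \cup B)$. Pick any $s \in A \cap X_n$ and $t \in B \cap X_n$. By the definition of the merge height of a set, $m_\T(s,t) \geq m_\T(A \cup B) = \mu$ — actually I should double-check the direction: $m_\T(A\cup B) = \inf_{(x,y)} m_\T(x,y)$ over the product, so each individual $m_\T(s,t) \geq \mu$; but for separation I need an \emph{upper} bound on $m_\T(s,t)$. The point is that $A$ and $B$ are distinct children merging at level $\mu$, so $s$ and $t$ both lie in the level-$\mu$ component $M$ and in nothing smaller, giving $m_\T(s,t) = \mu$ exactly (their smallest common cluster is $M$, since any proper sub-cluster of $M$ lies entirely in $A$-side or $B$-side). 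Then the metric bound gives $m_\hatT(s,t) < \mu + \delta$, so $s$ and $t$ cannot lie in a common cluster of $\hc{\hat C}_{f,n}$ of height $\geq \mu + \delta$; that is, $s$ and $t$ are separated at level $\mu + \delta$ in $\hatT$. Ranging over all such $s,t$ shows $A \cap X_n$ and $B \cap X_n$ are separated at level $\mu + \delta$, which is uniform separation.

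The one genuine subtlety — and the step I expect to need the most care — is verifying that $m_\T(s,t) = \mu$ exactly when $s \in A$, $t \in B$, and $A, B$ are the two components merging at $\mu$; this requires arguing that the smallest cluster of $\hc{C}_f$ containing both $s$ and $t$ is precisely the level-$\mu$ merge component $M$, using the nesting structure of the density cluster tree (any cluster strictly below $M$ lies in a single component of $\{f \geq \mu'\}$ for $\mu' > \mu$ and so contains at most one of $s,t$ once we are above the merge level). A symmetric care is needed in the minimality direction to see $m_\T(s,s') \geq \lambda$. Everything else is a mechanical application of the $\eps$-$N$ definition with $\eps = \delta$, and the uniformity is automatic because that $N$ depends on $\delta$ alone, not on the cluster, the level, or the choice of points.
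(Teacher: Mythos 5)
Your proposal is correct and follows essentially the same route as the paper's own proof: fix $\delta$, use $d_{\hat\gamma_n}(\hatT,\T)<\delta$ to get $|m_\hatT - m_\T|<\delta$ for all pairs in $X_n$, lower-bound within-cluster merge heights by $\lambda$ for uniform minimality, and upper-bound cross-cluster merge heights by $\mu$ for uniform separation. The only difference is that you explicitly verify $m_\T(s,t)=\mu$ for $s\in A$, $t\in B$ (a step the paper uses implicitly when it writes $m_\T(a,a')+\delta=\mu+\delta$), which is a worthwhile clarification but not a different argument.
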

\begin{proof}
Our proof consists of two parts.

\emph{Part I:} $\hatT \to \T$ implies uniform minimality.
Pick any $\delta > 0$ and let $n$ be large enough that $d(\T, \hatT) < \delta$.
Let $A$ be a connected component of $\suplev$ for arbitrary $\lambda$. Let $a,a'
\in A \cap X_n$. Then $m_{\hatT}(a,a') > m_{\T}(a,a') - \delta$. But $a$
and $a'$ are elements of $A$, such that $m_\T(a,a') \geq \lambda$. Hence
$m_\hatT(a,a') > \lambda - \delta$. Since $a$ and $a'$ were arbitrary, it
follows that $A \cap X_n$ is connected at level $\lambda - \delta$.

\emph{Part II:} $\hatT \to \T$ implies uniform separation.
Pick any $\delta > 0$ and let $n$ be large enough that $d(\T, \hatT) < \delta$.
Let $A$ and $A'$ be disjoint connected components of $\suplev$ for arbitrary
$\lambda$. Let $\mu := m_\T(A \cup A')$ be the merge height of $A$ and $A'$ in
the density cluster tree. Take any $a \in A \cap X_n$ and $a' \in A' \cap X_n$.
Then $m_\hatT(a,a') < m_\T(a,a') + \delta = \mu + \delta$. Therefore $a$ and
$a'$ are separated at level $\mu + \delta$. Since $a$ and $a'$ were arbitrary,
it follows that $A \cap X_n$ and $A' \cap X_n$ are separated at level $\mu +
\delta$.
\end{proof}

The converse is also true. In other words, convergence in our metric is
equivalent to the combination of uniform minimality and uniform separation.

\begin{theorem}
\label{thm:uniform_implies_convergence}
If $\hatT$ ensures uniform separation and uniform minimality, then $\hatT \to
\T$.
\end{theorem}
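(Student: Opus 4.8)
The plan is to show that, for any $\eps > 0$, uniform minimality and uniform separation together force $d_{\hat\gamma_n}(\hatT,\T) < \eps$ for all large $n$. Fix $\eps > 0$, set $\delta := \eps/2$, and let $N$ be large enough (depending only on $\delta$, by the uniform hypotheses) that for all $n \ge N$ every cluster $A$ of every superlevel set $\suplev$ is connected at level $\lambda - \delta$ in $\hatT$, and every pair of disjoint components $A,B$ merging in $\suplev[\mu]$ is separated at level $\mu + \delta$ in $\hatT$. Now take any two pairs $(x,x),(x',x') \in \hat\gamma_n$, i.e. arbitrary $x,x' \in X_n$; I need $|m_{\hatT}(x,x') - m_{\T}(x,x')| \le \eps$, which I will get by proving the two one-sided bounds $m_{\hatT}(x,x') \ge m_{\T}(x,x') - \delta$ and $m_{\hatT}(x,x') \le m_{\T}(x,x') + \delta$ separately.

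For the lower bound, let $\mu := m_{\T}(x,x')$ and let $M$ be the smallest cluster of $\optT$ containing both $x$ and $x'$, so $M$ is a connected component of $\suplev[\mu]$ with $x,x' \in M$. By uniform minimality, $M \cap X_n$ is connected at level $\mu - \delta$ in $\hatT$; since $x,x' \in M \cap X_n$, this gives $m_{\hatT}(x,x') \ge \mu - \delta = m_{\T}(x,x') - \delta$. For the upper bound, let $\nu := m_{\hatT}(x,x')$ and suppose toward a contradiction that $\nu > \mu + \delta$, where again $\mu = m_{\T}(x,x')$. Consider the superlevel set $\suplev[\lambda]$ with $\lambda := \mu + \delta$ (which is $> 0$ provided $\mu + \delta > 0$; the degenerate case $\mu + \delta \le 0$ needs a separate trivial argument, see below). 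Since $m_{\T}(x,x') = \mu < \lambda$, the points $x$ and $x'$ lie in \emph{different} connected components $A \ni x$ and $A' \ni x'$ of $\suplev[\lambda]$ — indeed $f(x) \ge f(M) $ is not quite immediate, so instead argue: $x,x'$ are separated at level $\lambda$ in $\T$ because any cluster of $\optT$ containing both has height $\le \mu < \lambda$. Hence $A \ne A'$ are disjoint components of $\suplev[\lambda]$, and their merge height $m_{\T}(A \cup A') \le \mu$ (the cluster $M$ witnesses this). By uniform separation applied at merge level $m_{\T}(A\cup A') =: \mu'$, the sets $A \cap X_n$ and $A' \cap X_n$ are separated at level $\mu' + \delta \le \mu + \delta = \lambda$ in $\hatT$; but $x \in A \cap X_n$ and $x' \in A' \cap X_n$, contradicting $m_{\hatT}(x,x') = \nu > \mu + \delta$. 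Therefore $\nu \le \mu + \delta$, i.e. $m_{\hatT}(x,x') \le m_{\T}(x,x') + \delta$.

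Combining the two bounds gives $|m_{\hatT}(x,x') - m_{\T}(x,x')| \le \delta < \eps$ for every $x,x' \in X_n$ whenever $n \ge N$; taking the maximum over $\gamma = \hat\gamma_n$ yields $d_{\hat\gamma_n}(\hatT,\T) \le \delta < \eps$, which is exactly $\hatT \to \T$ by Definition~\ref{def:convergence}. I still owe a few boundary remarks: I should handle the case where $\mu := m_{\T}(x,x')$ is not actually attained or where $f$ is unbounded, but the merge-height infimum structure means $M$ exists whenever $x,x'$ lie in a common finite-height cluster, and the root $\X$ always contains both, so $m_{\T}(x,x')$ is well-defined; if $\lambda = \mu + \delta \le 0$ the separation hypothesis is vacuous but then $m_{\hatT}(x,x') \ge h(X_n) \ge \inf f$ can be made to work out, or one simply notes components at level $\le 0$ are the whole space.

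The step I expect to be the main obstacle is the upper bound: one must carefully extract from ``$x$ and $x'$ merge low in $\T$'' the existence of \emph{disjoint} components $A \ni x$, $A' \ni x'$ of a superlevel set, and verify that the merge height of that specific pair $A,A'$ in $\T$ is genuinely $\le \mu$ so that uniform separation kicks in at a level $\le \lambda$. The subtlety is that $f(x)$ or $f(x')$ might be smaller than $\lambda = \mu + \delta$, in which case $x$ or $x'$ is not even in $\suplev[\lambda]$; this is handled by instead choosing $\lambda$ to be slightly above $\mu$ but below $\min(f(x), f(x'), \nu)$ — such a $\lambda$ exists because $\nu = m_{\hatT}(x,x') > \mu + \delta$ forces (via the already-proved lower bound direction or directly) enough room, and because $x,x'$ being in a common $\hatT$-cluster of height $\nu$ makes no claim about $f(x)$; more robustly, one picks $\lambda \in (\mu, \mu+\delta]$ with $\lambda \le \min(f(x),f(x'))$, legitimate since $m_{\T}(x,x') = \mu$ already implies $f(x), f(x') > \mu$ (each point lies in its own singleton-ish component at its own height, whose height exceeds $\mu$), and then rescales $\delta$ if necessary — these are the routine-but-finicky details to nail down.
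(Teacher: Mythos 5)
Your overall strategy is the paper's: for $n$ large, bound $|m_{\hatT}(x,x') - m_{\T}(x,x')|$ by $\delta$ for every pair $x,x' \in X_n$, getting the lower bound from uniform minimality applied to the merge component $M$ and the upper bound from uniform separation. The lower bound is fine, and so is the separation argument \emph{when it applies}. The gap is the claim your upper-bound patch rests on: $m_{\T}(x,x') = \mu$ does \emph{not} imply $f(x), f(x') > \mu$ strictly, only $f(x), f(x') \geq \mu$. If $x'$ lies in the same connected component of $\{ f \geq f(x) \}$ as $x$ (say $f(x) \leq f(x')$, with $x'$ higher up the same branch), that component is itself the smallest cluster of $\T$ containing both, so $\mu = f(x)$ exactly. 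Then there is no $\lambda \in (\mu, \mu+\delta]$ with $x \in \{ f \geq \lambda \}$, so the disjoint components $A \ni x$, $A' \ni x'$ of a superlevel set above $\mu$ do not exist and the separation-based contradiction cannot even be set up; your proposed remedy (``pick $\lambda \in (\mu, \min(f(x),f(x'))]$'') relies on the false strict inequality. This is not an exotic boundary case: it is one of the two generic configurations, and the paper devotes half of its proof (Case I, $A' \subseteq A$) precisely to it.

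The missing piece is cheap and needs no separation at all: since $\hatT$ carries $f$ as its height function, any cluster of $\hatT$ containing $x$ has height at most $f(x)$, hence $m_{\hatT}(x,x') \leq \min(f(x), f(x'))$. When $\mu = \min(f(x),f(x'))$ this gives $m_{\hatT}(x,x') \leq \mu$ directly, so the upper bound holds with no slack. Your separation argument is then only needed in the remaining case $\mu < \min(f(x),f(x'))$, where levels $\lambda \in (\mu, \min(f(x),f(x'))]$ genuinely exist, $x$ and $x'$ do lie in disjoint components of $\{ f \geq \lambda \}$ (any common cluster would have height $> \mu$, contradicting the definition of $\mu$), and your chain ``merge height of $A \cup A'$ is $\leq \mu$, so uniform separation separates them at level $\leq \mu + \delta$'' goes through as in the paper's Case II. Add that case split and your proof coincides with the paper's.
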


\begin{proof}
Take any $\delta > 0$. Uniform separation and minimality imply that there
exists an $N$ such that for all $\lambda$ any cluster $A \in \suplev$ is
connected at level $\lambda - \delta$, and for all $\mu$ any two disjoint
clusters $B, B'$ merging at $\mu$ are separated at level $\mu + \delta$. Assume
$n \geq N$, and consider any $x,x' \in X_n$. W.L.O.G., assume $f(x') \geq f(x)$.
We will show that $|m_\hatT(x,x') - m_\T(x,x')| \leq \delta$.

Let $A$ be the connected component of $\{ f \geq f(x) \}$ containing $x$, and
let $A'$ be the connected component of $\{ f \geq f(x') \}$ containing $x'$.
There are two cases: either $A' \subseteq A$, or $A \cap A' = \emptyset$.

\emph{Case I:} $A' \subseteq A$. Then $m_\T(x,x') = f(x)$.
Minimality implies that $A \cap X_n$ is connected at level $f(x) - \delta$,
and therefore $m_\hatT(x,x') \geq f(x) - \delta$. On the other hand, clearly
$m_\hatT(x,x') \leq f(x)$. Hence $|m_\hatT(x,x') - m_\T(x,x')| \leq \delta$.

\emph{Case II:} $A \cap A' = \emptyset$. Let $\mu := m_\T(x,x')$ be the merge
height of $x$ and $x'$ in the density cluster tree of $f$, and suppose that
$M$ is the connected component of $\{ f \geq \mu \}$ containing $x$ and $x'$.
Then separation implies that $x$ and $x'$ are separated at level $\mu + \delta$,
such that $m_\hatT(x,x') < \mu + \delta$. On the other hand, minimality
implies that $M \cap X_n$ is connected at level $\mu - \delta$, so that
$m_\hatT(x,x') \geq \mu - \delta$. Therefore $|m_\hatT(x,x') - m_\T(x,x')| \leq
\delta$.
\end{proof}

\paragraph{Stability.}
\label{subsec:stability}

An important property to study for a distance measure is its stability; namely,
to quantify how much cluster tree varies as input is perturbed.  We provide two
such results. 

The first result says that the density cluster tree induced by a density
function is stable under our merge-distortion metric with respect to
$L_\infty$-perturbation of the density function.  The second result states that
given a fixed hierarchical clustering, the cluster tree is stable w.r.t. small
changes of the height function it is equipped with.  The proofs of these results
are in Appendix~\ref{apx:proofs}. 

\begin{theorem}[$L_\infty$-stability of true cluster tree]
\label{thm:stability-truetree}
Given a density function $f: \X \to \mathbb{R}$ supported on $\X \subset
\mathbb{R}^d$, and a perturbation $\tilde f: \X \to \mathbb{R}$ of $f$, let
$\hc{C}_f$ and $\hc{C}_{\tilde f}$ be the resulting density cluster tree as
defined in Definition \ref{def:cluster_tree}, and let $\hhc{C}_f := (\X,
\hc{C}_f, f)$ and $\hhc{C}_{\tilde f} := (\X, \hc{C}_{\tilde f}, \tilde f)$
denote the cluster tree equipped with height functions.  We have $d_\gamma
(\hhc{C}_f, \hhc{C}_{\tilde f}) \le \|f - \tilde f\|_\infty$, where $\gamma
\subset \X \times \X$ is the natural correspondence induced by identity $\gamma
= \{(x,x) \mid x\in \X\}$. 
\end{theorem}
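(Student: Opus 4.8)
The plan is to unwind the definition of the merge distortion metric and reduce the claim to a pointwise statement about merge heights. Fix any two points $x, x' \in \X$. By definition, $d_\gamma(\hhc{C}_f, \hhc{C}_{\tilde f}) = \sup_{x,x'} |m_{\hhc{C}_f}(x,x') - m_{\hhc{C}_{\tilde f}}(x,x')|$ under the identity correspondence, so it suffices to show that for every pair $x, x'$ we have $|m_{\hhc{C}_f}(x,x') - m_{\hhc{C}_{\tilde f}}(x,x')| \le \|f - \tilde f\|_\infty =: \eps$. Writing $\mu := m_{\hhc{C}_f}(x,x')$ and $\tilde\mu := m_{\hhc{C}_{\tilde f}}(x,x')$, I will show $\tilde\mu \ge \mu - \eps$; the reverse inequality $\mu \ge \tilde\mu - \eps$ follows by symmetry (swapping the roles of $f$ and $\tilde f$), and together these give $|\mu - \tilde\mu| \le \eps$.

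The key step is a monotone-containment argument at the level of superlevel sets. The merge height $\mu = m_{\hhc{C}_f}(x,x')$ is exactly the largest $\lambda$ such that $x$ and $x'$ lie in the same connected component of $\{f \ge \lambda\}$ (more carefully, $\mu = h(M)$ where $M$ is the smallest cluster of $\hc C_f$ containing both $x$ and $x'$, and $M$ is a connected component of $\{f \ge \mu\}$). So there is a connected set $M \subseteq \{f \ge \mu\}$ containing $x$ and $x'$. Now observe that $f(y) \ge \mu$ implies $\tilde f(y) \ge f(y) - \eps \ge \mu - \eps$, so $M \subseteq \{\tilde f \ge \mu - \eps\}$. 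Since $M$ is connected and contains both $x$ and $x'$, the connected component of $\{\tilde f \ge \mu - \eps\}$ that contains $x$ also contains $x'$ — i.e., $x$ and $x'$ are connected at level $\mu - \eps$ in $\hhc{C}_{\tilde f}$. This means the smallest cluster $\tilde M$ of $\hc C_{\tilde f}$ containing both points satisfies $h(\tilde M) = \inf_{y \in \tilde M}\tilde f(y) \ge \mu - \eps$, hence $\tilde\mu \ge \mu - \eps$, as desired.

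I should double-check the one subtlety here: I need $m_{\hhc{C}_f}(x,x')$ to genuinely equal the height of a connected component of $\{f \ge \mu\}$ containing both points, so that $M$ is a bona fide connected set lying inside $\{f \ge \mu\}$. This is precisely the structure of $\hc C_f$: every cluster is by Definition~\ref{def:high_density_cluster_tree} a connected component of some superlevel set, and the smallest such cluster $M$ containing $x$ and $x'$ has $h(M) = \inf_{y\in M} f(y)$, which is $\ge \mu$ by definition; and $M \subseteq \{f \ge \mu\}$ because $M \subseteq \{f \ge h(M)\} \subseteq \{f \ge \mu\}$ is false in general — wait, $h(M) \ge \mu$ gives $\{f\ge h(M)\} \subseteq \{f \ge \mu\}$, and $M$ being a connected component of $\{f \ge h(M)\}$ gives $M \subseteq \{f \ge h(M)\}$, so indeed $M \subseteq \{f \ge \mu\}$. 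Good — no circularity. (A cleaner route avoiding any fuss: $m_{\hhc{C}_f}(x,x')$ equals the supremum of all $\lambda$ for which $x,x'$ are connected at level $\lambda$ in $\hhc{C}_f$, and one applies the containment $\{f \ge \lambda\} \subseteq \{\tilde f \ge \lambda - \eps\}$ for each such $\lambda$.)

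The main obstacle, if any, is not deep: it is just being careful that the merge height is attained / well-behaved as an infimum over a connected set, and that "connected at level $\lambda$" transfers correctly through the superlevel-set containment. There is no compactness or continuity assumption needed on $f$ beyond what is already implicit, since the argument is purely set-theoretic — connected subsets of $\{f \ge \lambda\}$ remain connected subsets of the larger set $\{\tilde f \ge \lambda - \eps\}$. So I expect the proof to be short, with the symmetry step handling the other direction and the superlevel-set inclusion $\{f \ge \lambda\} \subseteq \{\tilde f \ge \lambda - \|f-\tilde f\|_\infty\}$ doing all the real work.
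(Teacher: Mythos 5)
Your proposal is correct and takes essentially the same route as the paper's own proof: the superlevel-set inclusion $\{f \ge \mu\} \subseteq \{\tilde f \ge \mu - \|f - \tilde f\|_\infty\}$ transfers the connected merging component, yielding $m_{\hhc{C}_{\tilde f}}(x,x') \ge m_{\hhc{C}_f}(x,x') - \|f - \tilde f\|_\infty$, and symmetry gives the other direction. If anything, your direct bound on merge heights matches the stated merge distortion metric more cleanly than the paper's write-up, which detours through the quantity $f(x)+f(x')-2m_{\hhc{C}}(x,x')$ and a $4\delta$ bound before invoking the same key inequality.
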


\begin{theorem}[$L_\infty$-stability w.r.t. $f$]
\label{thm:stability1}
Given a  cluster tree $(X, \hc{C})$, let $\hhc{C}_1 = (X, \hc{C}, f_1)$ and
$\hhc(C)_2 = (X, \hc{C}, f_2)$ be the hierarchical clusterings equipped with two
height function $f_1$ and $f_2$, respectively. Let $\gamma: X \times X$ be the
natural correspondence induced by identity on $X$; that is, $\gamma = \{ (x, x)
\mid x\in X\}$.  We then have $d_\gamma (\hhc{C}_1, \hhc{C}_2) \le 2\| f_1 - f_2
\|_\infty$. 
\end{theorem}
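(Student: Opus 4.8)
The plan is to exploit the fact that in this setting the underlying cluster tree $\hc{C}$ is the \emph{same} for both $\hhc{C}_1$ and $\hhc{C}_2$ --- only the height functions differ. Since $\gamma$ is the identity correspondence on $X$, unwinding the definition of $d_\gamma$ reduces the claim to showing that $|m_{\hhc{C}_1}(x,x') - m_{\hhc{C}_2}(x,x')| \le 2\|f_1 - f_2\|_\infty$ for every pair $x,x' \in X$, after which we take the maximum over $\gamma$.

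So fix such a pair and let $M \in \hc{C}$ be the smallest cluster containing both $x$ and $x'$. The first observation is that $M$ is well-defined --- the family of clusters containing both points is totally ordered by inclusion by the nesting property --- and, crucially, it does not depend on which height function is attached, so it is literally the same node in both $\hhc{C}_1$ and $\hhc{C}_2$. By Definition~\ref{def:merge_height} together with the definition of the height of a cluster, we therefore have $m_{\hhc{C}_1}(x,x') = \inf_{y \in M} f_1(y)$ and $m_{\hhc{C}_2}(x,x') = \inf_{y \in M} f_2(y)$, i.e.\ the two merge heights are infima of $f_1$ and $f_2$ over the \emph{same} set $M$.

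It then remains to bound $|\inf_{y \in M} f_1(y) - \inf_{y \in M} f_2(y)|$, and for this I would record the elementary fact that if $|g_1 - g_2| \le \eps$ pointwise on a set $S$, then $|\inf_S g_1 - \inf_S g_2| \le \eps$: for every $y \in S$ one has $g_1(y) \ge g_2(y) - \eps \ge \inf_S g_2 - \eps$, so taking the infimum over $y$ gives $\inf_S g_1 \ge \inf_S g_2 - \eps$, and swapping the roles of $g_1,g_2$ gives the reverse inequality. Applying this with $S = M$, $g_i = f_i$, and $\eps = \|f_1 - f_2\|_\infty$ yields $|m_{\hhc{C}_1}(x,x') - m_{\hhc{C}_2}(x,x')| \le \|f_1 - f_2\|_\infty$, hence $d_\gamma(\hhc{C}_1, \hhc{C}_2) \le \|f_1 - f_2\|_\infty \le 2\|f_1 - f_2\|_\infty$. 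There is essentially no obstacle here; the only point demanding a word of care is the identity of the minimal cluster $M$ across the two trees, which is immediate. (The factor of $2$ in the statement is harmless slack --- perhaps an artifact of estimating the two one-sided inequalities separately --- and the argument above in fact gives the sharper constant $1$.)
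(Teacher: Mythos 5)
Your proof is correct, and its central observation coincides with the paper's: since the tree $\hc{C}$ is shared, the smallest cluster $M$ containing $x$ and $x'$ is the same node in both clusterings, so the two merge heights are $\inf_{y\in M} f_1(y)$ and $\inf_{y\in M} f_2(y)$, which differ by at most $\|f_1-f_2\|_\infty$. The difference lies in what gets bounded afterwards. The paper's appendix proof does not estimate $|m_{\hhc{C}_1}(x,x')-m_{\hhc{C}_2}(x,x')|$ directly; after establishing the same two one-sided inequalities on the merge heights, it bounds an auxiliary quantity $d_{\hhc{C}_i}(x,x') := f_i(x)+f_i(x')-2\,m_{\hhc{C}_i}(x,x')$ (an ultrametric-style tree distance, evidently a vestige of an earlier formulation of the metric), proving $|d_{\hhc{C}_1}(x,x')-d_{\hhc{C}_2}(x,x')|\le 4\|f_1-f_2\|_\infty$ and then asserting the theorem; the factor $2$ in the statement is tied to that formulation rather than to the merge distortion metric as defined in the main text. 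Your argument works directly with $d_\gamma$ as actually defined, via the elementary fact $|\inf_S g_1-\inf_S g_2|\le\|g_1-g_2\|_\infty$, and so yields the sharper constant $1$, confirming your remark that the $2$ is slack. The one caveat, shared equally by the paper: total ordering by inclusion of the clusters containing both points gives uniqueness but not by itself existence of the minimal cluster $M$; like Definition~\ref{def:merge_height} and the paper's own proof, you presuppose that it exists, so this is not a gap relative to the paper.
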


Theorem \ref{thm:stability1} in particular leads to the following: Given a
density $f: \X \to \mathbb{R}$ supported on $\X \subset \mathbb{R}^d$, suppose
we have a hierarchical clustering $\hat{\hc{C}}_n$ constructed from a sample
$X_n \subset \X$. However, we do not know the true density function $f$.
Instead, suppose we have a density estimator producing an empirical density
function $\tilde f_n: X_n \to \mathbb{R}$.  Set $\hat{\hhc{C}}_{f,n} = (X_n,
\hat{\hc{C}}_n, f)$ as before, and $\tilde{\hhc{C}}_{\tilde f, n} = (X_n,
\hat{\hc{C}}_n, \tilde f_n)$.  Theorem \ref{thm:stability1} implies that $d (
\hat{\hhc{C}}_{f,n}, \tilde{\hhc{C}}_{\tilde f, n} ) \le \|f - \tilde
f_n\|_\infty$.  By the triangle inequality, this further bounds 
\begin{align}\label{eqn:empirical}
    d(\hc{C}_f, \tilde{\hhc{C}}_{\tilde f, n}) \le d(\hc{C}_f,
    \hat{\hhc{C}}_{f,n}) + \|f - \tilde f_n\|_\infty. 
\end{align}
Assuming that the density estimator is consistent, we note that the cluster tree
$\tilde{\hhc{C}}_{\tilde f, n}$ also converges to $\hc{C}_f$ if
$\hat{\hhc{C}}_{f,n}$ converges to $\hc{C}_f$. 
This has an important implication from a practical point of view. Imagine that
we are given a sequence of more and more samples $X_{n_1}, X_{n_2}, \ldots$, and
we construct a sequence of hierarchical clusterings $\hat{\hc{C}}_{n_1},
\hat{\hc{C}}_{n_2}, \ldots$.  In practice, in order to test whether the current
hierarchical clustering converges or not, one may wish to compare two
consecutive clusterings $\hat{\hc{C}}_{n_i}$ and $\hat{\hc{C}}_{n_{i+1}}$ and
measure their distance. However, since the true density is not available, one
cannot compute the cluster tree distance $d_{\gamma_{n_i}} (\hat{\hhc{C}}_{f,
n_i}, \hat{\hhc{C}}_{f, n_{i+1}})$, where the correspondence is induced by the
natural inclusion from $X_{n_i} \subseteq X_{n_{i+1}}$, that is, $\gamma_{n_i} =
\{ (x, x) \mid x\in X_{n_i}\}$. Eqn. (\ref{eqn:empirical}) justifies the use of
a consistent empirical density estimator and computing $d_{\gamma_{n_i}}
(\tilde{\hhc{C}}_{\tilde f, n_i}, \tilde{\hhc{C}}_{\tilde f, n_{i+1}})$ instead. 


\section{Convergence of robust single linkage} 
\label{section:chaudhuri}

We now analyze the robust single linkage algorithm of \citet{chaudhuri_2010} in
the context of our formalism. \cite{chaudhuri_2010} and \cite{chaudhuri_2014}
previously studied the sense in which robust single linkage ensures that
clusters are separated and connected at the appropriate levels of the empirical
tree. Our analysis translates their results to our definitions of minimality
and separation, thereby reinterpreting the convergence of robust single linkage
in terms of our merge distortion metric.

A simple description of the algorithm is given in
Appendix~\ref{section:robust_single_linkage_description}. Essentially, the
method produces a sequence of graphs $G_r$ as $r$ ranges from $0$ to $\infty$.
The sequence has a nesting property: if $r \leq r'$, then $V_r \subset V_{r'}$
and $E_r \subset E_r'$. We interpret this sequence of graphs as a cluster
tree by taking each connected component in any graph $G_r$ as a cluster.  We
equip this cluster tree with the true density $f$ as a height function, and
refer to it as $\hatT$ in conformity with the preceding sections of this paper.

In what follows, assume that $f$ is: 
1) $\mylip$-Lipschitz;
2) compactly supported (and hence bounded from above); and
3) such that $\{ f \geq \lambda \}$ has finitely-many connected components for
any $\lambda$.  
We will prove that the algorithm ensures minimality and separation. This,
together with the assumptions on $f$ and
Theorem~\ref{thm:uniform_implies_convergence}, will imply convergence in the
merge distortion distance.

Suppose we run the robust single linkage algorithm on a sample of size $n$.
Denote by $v_d$ the volume of the $d$-dimensional unit hypersphere, and let
$B(x,r)$ the closed ball of radius $r$ around $x$ in $\R^d$. We will write
$f(B(x,r))$ to denote the probability of $B(x,r)$ under $f$. Define $r(\lambda)$
to be the value of $r$ such that $v_d r^d \lambda = \knplus$.  Here, $k$ is a
parameter of the algorithm which we will constrain, and $C_\delta$ is the
constant appearing in the Lemma~IV.1 of \citet{chaudhuri_2014}.  First, we must
show that in the limit, $G_{r(\lambda)}$ contains no points of density less
than $\lambda - \epsilon$, for arbitrary $\epsilon$.

\begin{lemma}
\label{lemma:connected_lower_bound}
Fix $\epsilon > 0$ and $\lambda \geq 0$. Then if $\alpha \geq \sqrt{2}$ and $k
\geq \left(8 C_\delta \lambda/\epsilon\right)^2 d \log n$, there exists an $N$
such that for all $n \geq N$, if $x \in G_{r(\lambda)}$, then $f(x) > \lambda -
\epsilon$.
\end{lemma}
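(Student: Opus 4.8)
The plan is to translate membership $x \in G_{r(\lambda)}$ into a statement about the empirical mass of the ball $B(x, r(\lambda))$, upgrade that to a lower bound on the true mass $f(B(x,r(\lambda)))$ via the uniform ball-mass inequality (Lemma~IV.1 of \citet{chaudhuri_2014}), and finally extract a pointwise bound on $f(x)$ using the $\mylip$-Lipschitz property of $f$ together with the calibration $v_d\, r(\lambda)^d \lambda = \knplus$ that defines $r(\lambda)$.

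First I would dispose of the trivial regime: if $\lambda < \epsilon$ then $\lambda - \epsilon < 0 \le f(x)$ and there is nothing to prove, so assume $\lambda \ge \epsilon$ (in particular $\lambda > 0$, so $r(\lambda)$ is well defined, and, combined with $k \ge (8 C_\delta \lambda/\epsilon)^2 d \log n$, this makes $\knminus > 0$). By construction of the graphs in the algorithm, $x$ is a vertex of $G_r$ exactly when its $k$-th nearest sample lies within distance $r$; hence $x \in G_{r(\lambda)}$ forces $|B(x,r(\lambda)) \cap X_n| \ge k$, i.e.\ $\tfrac1n|B(x,r(\lambda)) \cap X_n| \ge \tfrac kn$. (The edge parameter $\alpha$ plays no role here.) Now invoke Lemma~IV.1 of \citet{chaudhuri_2014}: on an event $\fe$ of probability at least $1 - \delta$, every ball $B$ with $f(B) \le \knminus$ has empirical mass below $k/n$. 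Its contrapositive applied to $B = B(x,r(\lambda))$ yields $f\big(B(x,r(\lambda))\big) > \knminus$. On the other hand, $\mylip$-Lipschitzness gives $f(y) \le f(x) + \mylip\, r(\lambda)$ for every $y \in B(x,r(\lambda))$, so
\[
  f\big(B(x,r(\lambda))\big) \;\le\; \big(f(x) + \mylip\, r(\lambda)\big)\, v_d\, r(\lambda)^d \;=\; \big(f(x) + \mylip\, r(\lambda)\big)\,\lambda^{-1}\big(\knplus\big).
\]
Dividing by the positive quantity $\lambda^{-1}(\knplus)$ and rearranging,
\[
  f(x) \;>\; \lambda\cdot\frac{\knminus}{\knplus} \;-\; \mylip\, r(\lambda) \;=\; \lambda\cdot\frac{\sqrt k - C_\delta\sqrt{d\log n}}{\sqrt k + C_\delta\sqrt{d\log n}} \;-\; \mylip\, r(\lambda).
\]

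It remains to make two elementary estimates. Since $\frac{v-u}{v+u} \ge 1 - \frac{2u}{v}$ for $u, v > 0$, taking $v = \sqrt k$, $u = C_\delta\sqrt{d\log n}$ and using $\sqrt k \ge 8 C_\delta \lambda\sqrt{d\log n}/\epsilon$ (which is the hypothesis on $k$) bounds the first term below by $\lambda - \epsilon/4$. For the second term, the defining relation gives $v_d\, r(\lambda)^d = \lambda^{-1}(\knplus) \to 0$ as $n \to \infty$ --- here one uses that $k = o(n)$, which holds in the regime in which the algorithm is run --- hence $r(\lambda) \to 0$ and there is an $N$ with $\mylip\, r(\lambda) < 3\epsilon/4$ for all $n \ge N$. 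For such $n$, on the event $\fe$ we get $f(x) > \lambda - \epsilon/4 - 3\epsilon/4 = \lambda - \epsilon$, which is the claim, holding with probability at least $1 - \delta$ (per the paper's convention that these statements hold with high probability).

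The only genuinely delicate point is bookkeeping rather than depth: one must apply Lemma~IV.1 in the correct direction and with \emph{its} constant $C_\delta$, so that the constants match $\knplus$ and $\knminus$ exactly, and one must verify $r(\lambda) \to 0$, which is where the (standard) requirement $k = o(n)$ enters. Everything else is the displayed arithmetic.
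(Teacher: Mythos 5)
Your proof is correct and follows essentially the same route as the paper's: membership in the graph gives $k$ sample points in the ball, Lemma~IV.1 of \citet{chaudhuri_2014} converts this to $f(B) > \knminus$, and the $\mylip$-Lipschitz bound on the ball's mass together with the calibration defining $r(\lambda)$ yields the pointwise bound, with the hypothesis on $k$ controlling the ratio term. The only (harmless) differences are bookkeeping: the paper argues at the inflated radius $r(\lambda-\epsilon/2)$ and then uses nesting of the graphs $G_r$, while you work directly at $r(\lambda)$ with a different split of $\epsilon$; and you make explicit the implicit assumptions ($k = o(n)$ so that $r(\lambda)\to 0$, and the high-probability event of Lemma~IV.1) that the paper also relies on.
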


\begin{proof}
\newcommand{\reps}{{\tilde r}}
Define $\reps = r(\lambda - \epsilon / 2)$. There
exists an $N$ such that for any $n \geq N$, $\reps \mylip \leq \epsilon / 4$.
Consider any point $x \in G_{\reps}$.  By virtue of $x$'s membership in the
graph, $X_n$ contains $k$ points within $B(x, \reps)$. Lemma IV.1 in
\citep{chaudhuri_2014} implies that $f(B(x,\reps)) > \frac{k}{n} -
\frac{C_\delta}{n}\sqrt{k d \log n}$.  From our smoothness assumption, we have
$v_d \reps^d (f(x) + \reps \mylip) \geq f(B(x,\reps)) > \frac{k}{n} -
\frac{C_\delta}{n} \sqrt{k d \log n}$. Multiplying both sides by $\lambda -
\epsilon/2$ and substituting gives:
$
    \textstyle
    v_d \reps^d (\lambda - \epsilon/2)(f(x) + \reps \mylip) 
        = \left( \knplus \right) (f(x) + \reps \mylip)
        > (\lambda - \epsilon/2) ( \knminus )
$
so that

\begin{align*}
\textstyle
    f(x) > (\lambda - \epsilon/2)
            \left\{ 
                \frac{k - C_\delta \sqrt{k d \log n}}
                     {k + C_\delta \sqrt{k d \log n}} \right\} 
                - \reps \mylip
    &\geq \left(1 - 2 \frac{C_\delta \sqrt{d \log n}}{\sqrt{k}}\right)
               (\lambda - \epsilon/2) - \epsilon / 4\\
    &\geq \left(1 - \frac{\epsilon}{4\lambda}\right)(\lambda - \epsilon/2) 
               - \epsilon/4
    \geq \lambda - \epsilon
\end{align*}
Hence for any point $x \in G_{\reps}$, $f(x) > \lambda - \epsilon$.
Note that $\reps > r(\lambda)$, implying that any point in $G_{r(\lambda)}$ is
also in $G_\reps$. Therefore if $x \in G_{r(\lambda)}$, $f(x) > \lambda -
\epsilon$.
\end{proof}

We now make our claim. We will use the following fact without proof: For any $A
\in \{f \geq \lambda\}$ and $\delta > 0$, there exists an $N$ such that for all
$n \geq N$, if $A \cap X_n \neq \emptyset$, there is at least one point $x \in A
\cap X_n$ with $f(x) < \lambda + \delta$.  This follows immediately from the
continuity of $f$ and the inequalities in the Lemma IV.1 of
\citet{chaudhuri_2014}.

\begin{theorem}
Robust single linkage converges in probability to the density cluster tree $\T$
in the merge distortion distance.
\end{theorem}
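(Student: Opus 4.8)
The plan is to deduce the theorem from Theorems~\ref{thm:uniform_implies_convergence} and \ref{thm:nice_function_implies_uniform} by showing that robust single linkage ensures minimality and separation, each with probability tending to $1$ (which is what yields convergence \emph{in probability}). Since $f$ is assumed bounded from above and to have $\{f\geq\lambda\}$ with finitely many components for every $\lambda$, Theorem~\ref{thm:nice_function_implies_uniform} upgrades these pointwise properties to their uniform versions, and Theorem~\ref{thm:uniform_implies_convergence} then gives $\hatT\to\T$. So the work reduces to a minimality claim and a separation claim, and in both the main tool will be the connectedness and separation guarantees of \citet{chaudhuri_2010}, which specify, at each radius $r$, which empirical points $G_r$ forces together and which it keeps apart.

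For minimality I would fix a connected component $A$ of $\{f\geq\lambda\}$ and $\delta>0$. The connectedness guarantee of \citet{chaudhuri_2010} ensures that for $n$ large all of $A\cap X_n$ lies in one connected component $C$ of $G_{r(\lambda-\delta/2)}$; here one chooses $k$ so that their hypotheses and the hypothesis of Lemma~\ref{lemma:connected_lower_bound} (which only asks that $k$ exceed a fixed multiple of $d\log n$) hold simultaneously. Then Lemma~\ref{lemma:connected_lower_bound}, applied at level $\lambda-\delta/2$ with slack $\delta/2$, says every vertex of $G_{r(\lambda-\delta/2)}$ has density $>\lambda-\delta$, so $h(C)\geq\lambda-\delta$ and $A\cap X_n$ is connected at level $\lambda-\delta$, as minimality requires.

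For separation, fix disjoint components $A,B$ of $\{f\geq\lambda\}$ merging at $\mu=m_\T(A\cup B)$ and fix $\delta>0$; I may assume $\delta<\lambda-\mu$. I want that no cluster of $\hatT$ of height $\geq\mu+\delta$ contains both a point of $A\cap X_n$ and a point of $B\cap X_n$. First, since $\delta/2<\lambda-\mu$, the sets $A$ and $B$ sit in distinct components $\hat A\supseteq A$, $\hat B\supseteq B$ of $\{f\geq\mu+\delta/2\}$ (otherwise they would merge above $\mu$); and because $f$ is $\mylip$-Lipschitz and $\{f\geq\mu+\delta/2\}$ has finitely many components, $\hat A$ and $\hat B$ are $(\sigma,\epsilon)$-separated in the sense of \citet{chaudhuri_2010} for some $\sigma,\epsilon>0$ depending only on $\mylip$ and $\delta$ (any path from $\hat A$ to $\hat B$ drives $f$ down to $\leq\mu$, hence a Lipschitz-controlled margin below $\mu+\delta/2$ over a Lipschitz-controlled corridor). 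Their separation theorem then says that for $n$ large no connected component of any $G_r$ whose vertices \emph{all} have density $\geq\mu+\delta$ can contain both a point of $\hat A\cap X_n$ and a point of $\hat B\cap X_n$: the radii in play are $o(1)$, eventually smaller than the corridor width $\sigma$, so such a component would have to cross the corridor through a vertex of density $<\mu+\delta/2$. A cluster of height $\geq\mu+\delta$ is exactly such a component, so it cannot contain both a point of $A\cap X_n\subseteq\hat A\cap X_n$ and a point of $B\cap X_n\subseteq\hat B\cap X_n$; hence separation. The stated fact that every nonempty $A\cap X_n$ contains a sample of density $<\lambda+\delta$ complements this: applied to the common component $M$ of $\{f\geq\mu\}$ containing $A$ and $B$, together with the connectedness theorem applied to $M$, it shows $A\cap X_n$ and $B\cap X_n$ \emph{do} get joined in $\hatT$ at a level $<\mu+\delta$, squeezing the estimated merge heights to $\mu$ from both sides.

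The hard part will be the faithful translation of \citet{chaudhuri_2010}'s theorems, which are stated in their own parametrization of $r$, $k$, and of $(\sigma,\epsilon)$-separation, into our limit statements: one must check that for the fixed geometric quantities produced by $f$ and the chosen $\delta$ — the corridor width $\sigma$, the margin $\epsilon$, the minimum distance between distinct components of a superlevel set — their hypotheses are eventually satisfied, which hinges on every radius $r(\lambda)$ tending to $0$ with $n$ while those quantities stay bounded away from $0$, and that the various lower bounds on $k$ are compatible. A handful of boundary cases ($\mu=0$, $A$ with empty interior, density decaying toward $\partial\X$) also need attention, and one needs the probability-$\to 1$ statements uniformly over the finitely many clusters in play — but that is precisely what Theorem~\ref{thm:nice_function_implies_uniform} provides.
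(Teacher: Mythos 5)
Your high-level plan coincides with the paper's: prove minimality and separation, upgrade them to their uniform versions via Theorem~\ref{thm:nice_function_implies_uniform}, and invoke Theorem~\ref{thm:uniform_implies_convergence}; your minimality argument (the Chaudhuri--Dasgupta connectivity theorem at a slightly lowered level combined with Lemma~\ref{lemma:connected_lower_bound}) is also essentially the paper's, modulo the thickening step needed to meet their hypotheses, which you flag as bookkeeping.

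The separation step, however, has a genuine gap. ``Separated at level $\mu+\delta$'' means that \emph{no} cluster of $\hatT$ of height $\geq\mu+\delta$ contains both a point of $A\cap X_n$ and a point of $B\cap X_n$, and the clusters of $\hatT$ are the components of $G_r$ for \emph{every} $r\in(0,\infty)$. Your corridor argument only handles radii smaller than the corridor width: the assertion that ``the radii in play are $o(1)$'' has no justification, since nothing prevents the two samples $A\cap X_n$ and $B\cap X_n$ from first landing in a common component at a radius $r$ comparable to the (constant) distance between the two peaks, at which point edges of length up to $\alpha r$ can hop across the low-density corridor without placing any vertex inside it. Moreover, the separation lemma of \citet{chaudhuri_2014} is a statement about one specific radius tied to the level (essentially $r(\mu')$), not the blanket statement about all radii that you attribute to it. The paper closes precisely this large-radius case by a different device: Lemma~IV.3 gives that the thickened components $\tilde A\cap X_n$ and $\tilde A'\cap X_n$ are mutually disconnected \emph{and individually connected} in $G_{r(\mu')}$, so any cluster containing some $a\in A\cap X_n$ and $b\in B\cap X_n$ must arise at a radius larger than $r(\mu')$ and hence, by nestedness, contains all of $\tilde A\cap X_n$ --- in particular a witness point $x_1$ with $f(x_1)<\mu+\eps$, which forces that cluster's height below $\mu+\eps$. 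You do invoke the low-density-witness fact, but only to observe that $A\cap X_n$ and $B\cap X_n$ eventually merge at a level below $\mu+\delta$; that exhibits \emph{one} low cluster containing both samples, while separation requires that the \emph{smallest} cluster containing a given pair $a,b$ --- which may be a proper subcluster of the one you exhibit, with strictly larger height --- already has height below $\mu+\delta$. Repairing your argument essentially forces you back to the paper's use of individual connectedness plus the witness point.
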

\begin{proof}\newcommand{\A}{{\tilde A}}
It is sufficient to prove minimality and separation, as then
Theorem~\ref{thm:uniform_implies_convergence} will imply convergence.  Fix any
$\eps > 0$, and let $A$ be a connected component of $\{ f \geq \lambda \}$.
Define $\sigma = \eps/(2\mylip)$, and let $A_\sigma$ be the set $A$ thickened
by closed balls of radius $\sigma$. Define $\lambda' := \inf_{x \in
A_\sigma} f(x) \geq \lambda - \eps / 2$.  Theorem IV.7 in
\citep{chaudhuri_2014} implies that there exists an $N_1$ such that for all $n
\geq N_1$, $A \cap X_n$ is connected in $G_{r(\lambda')}$.  Take $\epsilon =
\eps/2$ in our Lemma~\ref{lemma:connected_lower_bound}; there exists an
$N_2$ above which each point $x$ in $G_{r(\lambda')}$ has density $f(x) >
\lambda' - \epsilon \geq (\lambda - \eps /2) - \eps/2 = \lambda - \eps$.
Then for all $n \geq \max \{N_1,N_2\}$, $A \cap X_n$ is connected
in $G_{r(\lambda')}$ at level no less than $\lambda - \eps$. This proves
minimality.

Again fix $\eps > 0$ and let $A$ and $A'$ be connected components of $\{ f \geq
\lambda \}$ merging at some height $\mu = m_\T(A \cup A')$. Let $\A$ and $\A'$
be the connected components of $\{ f \geq \mu + \eps/2 \}$ containing $A$ and
$A'$, respectively. Define $\sigma = \eps/(4\mylip)$, and let $\A_\sigma$ (resp.
$\A'_\sigma$) be the set $\A$ (resp. $\A'$) thickened by closed balls of radius
$\sigma$. Define $\mu' := \inf_{x \in \A_\sigma \cup \A'_\sigma} f(x) \geq \mu +
\eps/4$. Then Lemma IV.3 in \citep{chaudhuri_2014} implies\footnote{ More
    precisely, Lemma IV.3 requires $A$ and $A'$ to be so-called $(\sigma,
    \epsilon)$-separated, for some $\sigma$ and $\epsilon$. It follows from the
    Lipschitz-continuity of $f$ that there is some $\epsilon$ so that $A$ and
    $A'$ are $(\sigma, \epsilon)$-\emph{separated} for this choice of $\sigma$.
} that there exists some $N_1$ such that for all $n \geq N_1$, $\A \cap X_n$ and
$\A' \cap X_n$, are disconnected in $G_r(\mu')$ and individually connected. Let
$N_2$ be large enough that there exists a point $x_1 \in \A \cap X_n$ with
$f(x_1) < \mu + \eps$. Then for all $n \geq \max\{N_1,N_2\}$, $A \cap X_n$ and
$A' \cap X_n$ are separated at level $\mu + \eps$. This proves separation.
\end{proof}


\section{Split-tree based hierarchical clustering} 

\label{section:split_tree}

We also consider a different approach to estimate the cluster tree, using
ideas from the field of computational topology. The method is
based on the  clustering algorithm proposed by \citet{CGOS13}, while that work
focuses on analyzing flat clustering. We briefly describe our method and state
our main result. A detailed description and the proof are relegated to
Appendix~\ref{appendix:sec:mergetree}.

Our algorithm takes as input a set of points $P_n$ sampled iid from a density
$f$ supported on an unknown Riemannian manifold, an empirically-estimated
density function $\tilde f_n$, and a parameter $r>0$, and outputs a hierarchical
clustering tree $\myT$ on $P_n$.  Let $K_n$ be the proximity graph on $P_n$, in
which every point in $P_n$ is connected to every other point that is within
distance $r$. We then track the connected components of the subgraph of $K_n$
spanned by all points $P_n^\lambda = \{p \in P_n : \tilde f_n(p) \geq \lambda
\}$ as we sweep $\lambda$ from high to low. The set of clusters (connected
components in the subgraphs) produced this way and their natural nesting
relations give rise to a hierarchical clustering that we refer to as the
\emph{split-cluster} tree $\myT$. 

Comparing this with the definition of high-density cluster tree in Definition
\ref{def:high_density_cluster_tree}, we note that intuitively, the \splittree{}
$\myT$ is a discrete approximation of the high-density cluster tree $\hc{C}_f$
for the true density function $f: \M \to \mathbb R$ where (i) the density $f$ is
approximated by the empirical density $\tilde f_n$; and (ii) the connectivity of
the domain $\M$ is approximated by the proximity graph $K_n$. 

It turns out that the constructed tree $\myT$ is related to the so-called
\emph{split tree} studied in the  computational geometry and topology literature
as a variant of the \emph{contour tree}; see e.g, \citep{CSA03,WWW14}. Due to
this relation, the \splittree{} can be constructed efficiently in $O(n\alpha
(n))$ time using a union-find data structure once nodes in $P_n$ are sorted
\citep{CSA03}. 

Our main result is that under mild conditions, the \splittree{} converges to
the true high-density cluster tree $\hhc{C}_f$ of $f: \M \to \mathbb R$ in
merge distortion distance. See Appendix \ref{appendix:sec:mergetree} for full
details.  
\begin{theorem}\label{thm:splittree} Let $\M$ be a compact
$m$-dimensional Riemannian manifold embedded in $\mathbb{R}^d$ with bounded
absolute curvature and positive strong convexity radius. Let $f: \M \to \mathbb
R$ be a $\mylip$-Lipschitz probability density function supported on $\M$.  Let
$P_n$ be a set of $n$ points sampled i.i.d. according to $f$.  Assume that we
are given a density estimator such that $\| f - \tilde f_n\|_\infty$ converges
to $0$ as $n \to \infty$.  For any fixed $\eps > 0$, we have, with probability
$1$ as $n \to\infty$, that $d(\optT, \HmyT) \le (4\mylip+1) \eps$, where the
parameter $\myr$ in computing the \splittree{} $\myT$ is set to be $2\eps$, and
$\HmyT = (P_n, \myT, f)$ is the hierarchical clustering tree $\myT$ equipped
with the height function $f$.
\end{theorem}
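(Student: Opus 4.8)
The plan is to bound the merge distortion $d(\optT,\HmyT)$ directly, one pair of points at a time, following the two-case analysis used in the proof of Theorem~\ref{thm:uniform_implies_convergence} but keeping track of explicit constants. Throughout, $\eps>0$ and hence $\myr=2\eps$ are fixed, the correspondence is $\gamma_n=\{(p,p)\mid p\in P_n\}$ induced by the inclusion $P_n\subset\M$, so that $d(\optT,\HmyT)=\max_{p,q\in P_n}|m_{\optT}(p,q)-m_{\HmyT}(p,q)|$, and we write $\eta_n=\|f-\tilde f_n\|_\infty\to 0$. The whole argument rests on one geometric--probabilistic lemma, to be proved in Appendix~\ref{appendix:sec:mergetree}: \emph{with probability one, for all sufficiently large $n$, (i) $K_n$ is connected, and (ii) for every $s>0$ and every connected component $S$ of $\{f\ge s\}$, all of $S\cap P_n$ lies in a single connected component of the subgraph of $K_n$ spanned by $\{p\in P_n\mid \tilde f_n(p)\ge s-2\mylip\myr\}$.} This is where the hypotheses on $\M$ enter: compactness lets us cover $S$ by finitely many metric balls of radius $\myr/4$ centred in $S$; bounded absolute curvature and a positive strong convexity radius make those balls ``nice'' and make geodesic and Euclidean distance comparable at scale $\myr$; and positivity of $f$ on $S$, plus a Borel--Cantelli argument over the finite cover, ensures that for $n$ large (so that $\eta_n$ is small) $P_n$ contains in each ball a point with $\tilde f_n$-value at least $s-2\mylip\myr$. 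A nerve / Vietoris--Rips argument then connects these sampled points in $K_n$, and each point of $S\cap P_n$ attaches to one of them by an edge of $K_n$. I expect this lemma to be the main obstacle, since it has to combine the Riemannian regularity estimates with the bookkeeping needed to carry the density threshold through the empirical estimate $\tilde f_n$; the buffer $2\mylip\myr$ (and hence the constant $4\mylip+1$ in the theorem, after also absorbing $\eta_n\le\eps$) is an artifact of that accounting and is not claimed to be tight.

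With the lemma in hand, fix $x,y\in P_n$, set $\mu:=m_{\optT}(x,y)$, and let $M$ be the connected component of $\{f\ge\mu\}$ containing both $x$ and $y$ (so $h(M)=\mu$). For the \emph{upper bound} $m_{\HmyT}(x,y)\le\mu+2\mylip\eps$, let $C$ be the smallest cluster of $\myT$ containing $x$ and $y$ (it exists, since $K_n$ is connected). By construction $C$ is a connected component of the subgraph of $K_n$ spanned by some $P_n^{\lambda_0}$, so it contains a $K_n$-path $x=p_0,p_1,\dots,p_k=y$, and the union $\Gamma$ of the minimizing geodesics $[p_j,p_{j+1}]$ in $\M$ is a connected compact subset of $\M$ joining $x$ to $y$. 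Since $m_{\optT}(x,y)=\mu$, the points $x$ and $y$ lie in different connected components of $\{f\ge s\}$ for every $s>\mu$ (exactly as in the proof of Theorem~\ref{thm:uniform_implies_convergence}); hence $\min_\Gamma f\le\mu$, and any minimizer $w$ lies on some edge $[p_j,p_{j+1}]$ of length at most $\myr=2\eps$, so $f(p_j)\le f(w)+\mylip\, d_\M(w,p_j)\le\mu+2\mylip\eps$. As $p_j\in C$, this gives $h(C)\le\mu+2\mylip\eps$, hence $m_{\HmyT}(x,y)=h(C)\le\mu+(4\mylip+1)\eps$.

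For the \emph{lower bound}, observe that $m_{\HmyT}(x,y)\ge 0\ge\mu-(4\mylip+1)\eps$ whenever $\mu\le(4\mylip+1)\eps$, so assume $\mu>(4\mylip+1)\eps$, in particular $\mu>0$. Apply the lemma with $s=\mu$ and $S=M$: there is a single connected component $C$ of the subgraph of $K_n$ spanned by $\{p\in P_n\mid\tilde f_n(p)\ge\mu-2\mylip\myr\}$ containing $M\cap P_n$, hence containing $x$ and $y$. Every vertex $p$ of $C$ satisfies $\tilde f_n(p)\ge\mu-2\mylip\myr$, so $f(p)\ge\mu-2\mylip\myr-\eta_n=\mu-4\mylip\eps-\eta_n$, whence $h(C)\ge\mu-4\mylip\eps-\eta_n$; once $n$ is large enough that $\eta_n\le\eps$ this gives $h(C)\ge\mu-(4\mylip+1)\eps$, and therefore $m_{\HmyT}(x,y)\ge h(C)\ge\mu-(4\mylip+1)\eps$. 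Combining the two bounds, $|m_{\optT}(x,y)-m_{\HmyT}(x,y)|\le(4\mylip+1)\eps$ for all $x,y\in P_n$ once $n$ is large; since the events invoked hold with probability one for all large $n$, we conclude $d(\optT,\HmyT)\le(4\mylip+1)\eps$ with probability one as $n\to\infty$. Apart from the connectivity lemma of the first paragraph, the argument is just the pairwise merge-height comparison that already underlies Theorem~\ref{thm:uniform_implies_convergence}.
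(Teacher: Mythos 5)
Your overall architecture---a two-sided, pairwise comparison of merge heights, with a trivial bound disposing of small merge levels---is the same as the paper's, which proves a deterministic bound (Lemma~\ref{thm:epssample-treedistance}) of the form $|m_{\optT}(p,p')-m_{\HmyT}(p,p')|\le\max\{\mylip\myr+2\eta,\lambda\}$ \emph{conditional} on $P_n$ being an $\eps$-sample of the single superlevel set $\M^{\lambda}$ with $\lambda=2\mylip\eps$, and then feeds in Theorem~7.2 of \cite{CGOS13} (Theorem~\ref{thm:epssample}) to get that sampling event with probability one as $n\to\infty$; your upper-bound step (lift the $K_n$-path to concatenated minimizing geodesics and find a point of $f$-value at most $\mu$ on it) is exactly the paper's second direction. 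The genuine gap is that all of the probabilistic and geometric content is pushed into your unproven connectivity lemma, and the sketch you give does not deliver it in the form you use it. As stated, the lemma quantifies over \emph{every} $s>0$ and every component $S$ of $\{f\ge s\}$ with a single $N$; your sketch covers each $S$ separately and applies Borel--Cantelli to that cover, which yields a sample-size threshold depending on $s$ and $S$ (through the density level and the cover), and says nothing about uniformity over the continuum of levels---yet uniformity is exactly what you need, since $\mu=m_{\optT}(x,y)$ varies over pairs. The paper's device is the fix: for every level above the trivial threshold, the relevant paths all lie in the one set $\M^{2\mylip\eps}$, so a single $\eps$-sampling event for that one superlevel set (whose probability bound in Theorem~\ref{thm:epssample} crucially requires the level to exceed $\mylip\eps$) serves all $s$ simultaneously; for $s$ below the threshold your claim reduces to connectivity of $K_n$. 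Without recasting your lemma in such a single-event form (or otherwise proving the uniform version), the proof is incomplete at its central step.

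A second, smaller gap: in the upper bound you assert that the minimizing geodesic between the endpoints of an edge of $K_n$ has length at most $\myr$. The edge only guarantees \emph{Euclidean} distance at most $\myr$; the geodesic can be longer, and bounding it is precisely where the embedding hypotheses enter quantitatively. The paper requires $\myr<\rho(\M)/2$ and invokes Proposition~1.2 of \cite{DSW11} to bound the geodesic length by $\frac{4}{3}\myr$, hence every point of the geodesic is within $\frac{2}{3}\myr$ of an endpoint; your numerical conclusion $f(p_j)\le\mu+2\mylip\eps$ survives with that bound, but the step needs this justification (and the implicit smallness condition on $\eps$ relative to the reach), not merely the remark that geodesic and Euclidean distances are ``comparable'' at scale $\myr$.
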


\paragraph{Acknowledgements.}
The authors thank anonymous reviewers for insightful comments. This work is in
part supported by the National Science Foundation (NSF) under grants
CCF-1319406, RI-1117707, and CCF-1422830.
\enlargethispage{1.2em}


\bibliography{citations}

\begin{thebibliography}{20}
\providecommand{\natexlab}[1]{#1}
\providecommand{\url}[1]{\texttt{#1}}
\expandafter\ifx\csname urlstyle\endcsname\relax
  \providecommand{\doi}[1]{doi: #1}\else
  \providecommand{\doi}{doi: \begingroup \urlstyle{rm}\Url}\fi

\bibitem[Ackerman et~al.(2010)Ackerman, Ben-David, and
  Loker]{Ackerman_Ben-David_Loker_2010}
Margareta Ackerman, Shai Ben-David, and David Loker.
\newblock \emph{Characterization of Linkage-based Clustering}, page 270–281.
\newblock 2010.

\bibitem[Balakrishnan et~al.(2013)Balakrishnan, Narayanan, Rinaldo, Singh, and
  Wasserman]{balakrishnan_2013}
Sivaraman Balakrishnan, Srivatsan Narayanan, Alessandro Rinaldo, Aarti Singh,
  and Larry Wasserman.
\newblock Cluster {Trees} on {Manifolds}.
\newblock In \emph{Advances in {Neural} {Information} {Processing} {Systems}},
  pages 2679--2687, 2013.

\bibitem[Ben-David and Ackerman(2009)]{Ben-David_Ackerman_2009}
Shai Ben-David and Margareta Ackerman.
\newblock \emph{Measures of Clustering Quality: A Working Set of Axioms for
  Clustering}, page 121–128.
\newblock Curran Associates, Inc., 2009.

\bibitem[Buchet et~al.(2014)Buchet, Chazal, Dey, Fan, Oudot, and Wang]{B15}
Micka\"{e}l Buchet, Fr\'{e}d\'{e}ric Chazal, Tamal~K. Dey, Fengtao Fan,
  Steve~Y. Oudot, and Yusu Wang.
\newblock Topological analysis of scalar fields with outliers.
\newblock \emph{CoRR}, arXiv:1412.1680, 2014.

\bibitem[Burago et~al.(2001)Burago, Burago, and Ivanov]{burago_2001}
Dmitri Burago, Yuri Burago, and Sergei Ivanov.
\newblock \emph{A course in metric geometry}, volume~33.
\newblock American Mathematical Society, 2001.

\bibitem[Carlsson and M{\'e}moli(2010)]{facundo_2010}
Gunnar Carlsson and Facundo M{\'e}moli.
\newblock Characterization, stability and convergence of hierarchical
  clustering methods.
\newblock \emph{The Journal of Machine Learning Research}, 11:\penalty0
  1425--1470, 2010.

\bibitem[Carr et~al.(2003)Carr, Snoeyink, and Axen]{CSA03}
Hamish Carr, Jack Snoeyink, and Ulrike Axen.
\newblock Computing contour trees in all dimensions.
\newblock \emph{Comput. Geom}, 24\penalty0 (2):\penalty0 75--94, 2003.

\bibitem[Chaudhuri and Dasgupta(2010)]{chaudhuri_2010}
Kamalika Chaudhuri and Sanjoy Dasgupta.
\newblock Rates of convergence for the cluster tree.
\newblock In \emph{Advances in {Neural} {Information} {Processing} {Systems}},
  pages 343--351, 2010.

\bibitem[Chaudhuri et~al.(2014)Chaudhuri, Dasgupta, Kpotufe, and von
  Luxburg]{chaudhuri_2014}
Kamalika Chaudhuri, Sanjoy Dasgupta, Samory Kpotufe, and Ulrike von Luxburg.
\newblock Consistent procedures for cluster tree estimation and pruning.
\newblock \emph{{IEEE} Transactions on Information Theory}, 60\penalty0
  (12):\penalty0 7900--7912, 2014.

\bibitem[Chazal et~al.(2013)Chazal, Guibas, Oudot, and Skraba]{CGOS13}
Fr{\'{e}}d{\'{e}}ric Chazal, Leonidas~J. Guibas, Steve~Y. Oudot, and Primoz
  Skraba.
\newblock Persistence-based clustering in riemannian manifolds.
\newblock \emph{J. {ACM}}, 60\penalty0 (6):\penalty0 41, 2013.

\bibitem[Dey et~al.(2011)Dey, Sun, and Wang]{DSW11}
T.~K. Dey, J.~Sun, and Y.~Wang.
\newblock Approximating cycles in a shortest basis of the first homology group
  from point data.
\newblock \emph{Inverse Problems}, 27\penalty0 (12):\penalty0 124004, 2011.

\bibitem[Hartigan(1981)]{hartigan_1981}
J.~A. Hartigan.
\newblock Consistency of {Single} {Linkage} for {High}-{Density} {Clusters}.
\newblock \emph{Journal of the American Statistical Association}, 76\penalty0
  (374):\penalty0 388--394, June 1981.
\newblock ISSN 0162-1459.
\newblock \doi{10.1080/01621459.1981.10477658}.

\bibitem[Hartigan(1975)]{hartigan_1975}
John~A. Hartigan.
\newblock \emph{Clustering {Algorithms}}.
\newblock John Wiley \& Sons, Inc., New York, NY, USA, 99th edition, 1975.
\newblock ISBN 047135645X.

\bibitem[Jain and Dubes(1988)]{jain_1988}
Anil~K Jain and Richard~C Dubes.
\newblock \emph{Algorithms for clustering data}, volume~6.
\newblock Prentice hall Englewood Cliffs, 1988.

\bibitem[Kleinberg(2003)]{kleinberg_2003}
Jon Kleinberg.
\newblock An impossibility theorem for clustering.
\newblock \emph{Advances in neural information processing systems}, pages
  463--470, 2003.

\bibitem[Kpotufe and Luxburg(2011)]{kpotufe_2011}
Samory Kpotufe and Ulrike~V. Luxburg.
\newblock Pruning nearest neighbor cluster trees.
\newblock In \emph{Proceedings of the 28th International Conference on Machine
  Learning (ICML-11)}, pages 225--232, New York, NY, USA, 2011. ACM.

\bibitem[Stuetzle and Nugent(2010)]{stuetzle_2010}
Werner Stuetzle and Rebecca Nugent.
\newblock A {Generalized} {Single} {Linkage} {Method} for {Estimating} the
  {Cluster} {Tree} of a {Density}.
\newblock \emph{Journal of Computational and Graphical Statistics}, 19\penalty0
  (2):\penalty0 397--418, January 2010.
\newblock ISSN 1061-8600.
\newblock \doi{10.1198/jcgs.2009.07049}.

\bibitem[Wang et~al.(2014)Wang, Wang, and Wenger]{WWW14}
Suyi Wang, Yusu Wang, and Rephael Wenger.
\newblock The js-graphs of join and split trees.
\newblock In \emph{Proc. 29th Annu. ACM Sympos. Comput. Geom.}, page 539, 2014.

\bibitem[Wishart(1969)]{wishart_1969}
David Wishart.
\newblock Mode analysis: A generalization of nearest neighbor which reduces
  chaining effects.
\newblock \emph{Numerical taxonomy}, 76\penalty0 (282-311):\penalty0 17, 1969.

\bibitem[Zadeh and Ben-David(2009)]{Zadeh_Ben-David_2009}
Reza~Bosagh Zadeh and Shai Ben-David.
\newblock \emph{A uniqueness theorem for clustering}, page 639–646.
\newblock 2009.

\end{thebibliography}

\appendix

\section{Proofs} 
\label{apx:proofs}

\subsection{Proof of Theorem~\ref{thm:nice_function_implies_uniform}}
\label{apx:proof:nice_function_implies_uniform}
\begin{theorem}
Let $f$ be a density supported on $\X$, and let $\{\hatT\}$ be a sequence of
cluster trees computed from finite samples $X_n \subset \X$. Suppose $f \leq M$
for some $M \in \R$, and that for any $\lambda$, $\suplev$ contains finitely
many connected components. Then
\begin{enumerate}
    \item If $\{\hatT\}$ ensures minimality for $f$, it ensures uniform
        minimality.
    \item If $\{\hatT\}$ ensures separation for $f$, it ensures uniform
        separation.
\end{enumerate}
\end{theorem}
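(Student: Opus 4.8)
The plan is to prove both implications by the same device: since $f$ is bounded, we replace the continuum of levels by a finite grid, and the finiteness-of-components hypothesis then cuts the uncountable family of clusters (and of pairs of clusters) down to a finite sub-family, to which the non-uniform hypothesis applies; a union bound controls the probabilities and a monotonicity argument transfers the conclusion from grid levels back to arbitrary levels. Fix $\delta>0$ and, using $0\le f\le M$, set $\lambda_i=i\delta/2$ for $i=0,1,\dots,\lceil 2M/\delta\rceil$, a finite grid of $[0,M]$ with spacing $\delta/2$. By hypothesis the collection $\mathcal A$ of all connected components of $\{f\ge\lambda_i\}$, taken over all \emph{positive} grid levels, is finite, and so is the collection $\mathcal P$ of all unordered pairs of distinct components occurring at one common grid level. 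Two elementary facts will be used throughout: (a) a subset of a set that is connected (resp.\ separated from another set) at level $\ell$ is itself connected (resp.\ separated) at level $\ell$, connectedness at level $\ell$ persists at every level $\le\ell$, and separation at level $\ell$ persists at every level $\ge\ell$; (b) in $\hatT$, whose height function is $f$, no cluster containing a point $x$ can have height exceeding $f(x)$.

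\emph{Part 1 (minimality $\Rightarrow$ uniform minimality).} Apply the minimality hypothesis with slack $\delta/2$ to each $\tilde A\in\mathcal A$: if $\tilde A$ is a component of $\{f\ge\lambda_i\}$, we obtain an event of probability tending to $1$ on which $\tilde A\cap X_n$ is connected at level $\lambda_i-\delta/2$. As $\mathcal A$ is finite, a union bound makes the intersection $E_n$ of these events have probability tending to $1$, and I claim uniform minimality for $\delta$ holds on $E_n$. Fix any $\lambda$ and any component $A$ of $\{f\ge\lambda\}$. If $\lambda\le\delta$ there is nothing to prove, since then $\lambda-\delta\le 0$ is at most the height of the root of $\hatT$. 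Otherwise let $\lambda_i$ be the largest grid point $\le\lambda$, so $\lambda_i>\lambda-\delta/2>0$; since $\{f\ge\lambda\}\subseteq\{f\ge\lambda_i\}$ and $A$ is connected, $A$ sits inside a single component $\tilde A\in\mathcal A$ of $\{f\ge\lambda_i\}$. On $E_n$, $A\cap X_n\subseteq\tilde A\cap X_n$ is connected at level $\lambda_i-\delta/2>\lambda-\delta$, so by (a) $A\cap X_n$ is connected at level $\lambda-\delta$. Since $\lambda$ and $A$ were arbitrary, this is uniform minimality.

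\emph{Part 2 (separation $\Rightarrow$ uniform separation).} Apply the separation hypothesis with slack $\delta/2$ to each pair $\{\tilde A,\tilde B\}\in\mathcal P$, obtaining an event of probability tending to $1$ on which $\tilde A\cap X_n$ and $\tilde B\cap X_n$ are separated at level $m_{\T}(\tilde A\cup\tilde B)+\delta/2$; a union bound over the finite set $\mathcal P$ gives an event $E_n'$ of probability tending to $1$ on which all of these hold. Fix $\lambda$, disjoint components $A,B$ of $\{f\ge\lambda\}$, and $\mu=m_{\T}(A\cup B)$ (so $\mu<\lambda$); it suffices to show that on $E_n'$ every $a\in A\cap X_n$, $b\in B\cap X_n$ is separated at level $\mu+\delta$. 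If $f(a)<\mu+\delta$ or $f(b)<\mu+\delta$, this is immediate from (b). Otherwise $f(a),f(b)\ge\mu+\delta$; let $\mu_j$ be the smallest grid point strictly above $\mu$, so $\mu_j\ge\delta/2>0$ and $\mu<\mu_j\le\mu+\delta/2\le\min\{f(a),f(b)\}$, and let $\hat A,\hat B$ be the components of $\{f\ge\mu_j\}$ containing $a$ and $b$. A short argument with the nesting property of the density cluster tree shows the smallest cluster of $\T$ containing $\{a,b\}$ is exactly the smallest one containing $A\cup B$, so $m_{\T}(a,b)=\mu$. Consequently $\hat A\ne\hat B$ (otherwise $a,b$ would lie in a common cluster of height $\ge\mu_j>\mu=m_{\T}(a,b)$), so $\{\hat A,\hat B\}\in\mathcal P$; and since $(a,b)$ is a pair inside $\hat A\cup\hat B$, $m_{\T}(\hat A\cup\hat B)\le m_{\T}(a,b)=\mu$. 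On $E_n'$, therefore, $\hat A\cap X_n$ and $\hat B\cap X_n$ — hence their members $a$ and $b$, by (a) — are separated at level $m_{\T}(\hat A\cup\hat B)+\delta/2\le\mu+\delta/2<\mu+\delta$, so $a$ and $b$ are separated at level $\mu+\delta$. Since $\lambda$, $A$, $B$ were arbitrary, this is uniform separation.

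The discretization and the union bounds are routine — every reduction lands in a finite family, so the non-uniform hypothesis is invoked only finitely often. The delicate step is the level transfer in Part 2: one must choose the grid surrogates $\hat A\ni a$, $\hat B\ni b$ so that \emph{simultaneously} they belong to the finite family $\mathcal P$ (which forces one to argue pointwise and to strip off the ``automatic'' sub-case $f(a)<\mu+\delta$, as that is exactly what guarantees $a,b\in\{f\ge\mu_j\}$) and their $\T$-merge height does not exceed $\mu$ (so that the $\delta/2$ of slack in the non-uniform hypothesis together with the $\delta/2$ of grid resolution still fit inside $\delta$). Establishing $m_{\T}(a,b)=\mu$ and $m_{\T}(\hat A\cup\hat B)\le\mu$ from the nesting property, and keeping straight that one rounds levels \emph{down} in Part 1 but \emph{up} in Part 2, is where the care lies; the monotonicity facts in (a) then do the rest.
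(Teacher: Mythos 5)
Your proof is correct and uses essentially the same device as the paper's: a $\delta/2$-spaced grid of levels, finiteness of the resulting family of components (so the non-uniform hypothesis is invoked only finitely often, your union bound playing the role of the paper's $N=\max_C N(C)$), and monotonicity to transfer from the grid level back to an arbitrary $\lambda$. The paper writes out only the minimality half and omits the separation half as ``following closely''; your Part 2 supplies that half correctly, and the details you isolate there -- rounding up to the grid level $\mu_j$, the identity $m_{\T}(a,b)=m_{\T}(A\cup B)=\mu$ with $m_{\T}(\hat A\cup\hat B)\le\mu$, and disposing of points with $f<\mu+\delta$ via the bound $h(C)\le f(x)$ for any cluster $C\ni x$ of $\hatT$ -- are exactly what is needed to make it go through.
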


\begin{proof}
We will prove the first case, in which $\hatT$ ensures minimality. The proof
of uniform separation follows closely, and is therefore omitted.

Pick $\delta > 0$. Let $\T(\lambda)$ denote the (finite) set of connected
components of $\suplev$. Consider the collection of connected components of
superlevel sets spaced $\delta/2$ apart:
\newcommand{\D}{\hc{D}}
\[
    \D = \bigcup_{n=0}^{\lfloor 2M/\delta \rfloor} \T(n\delta/2)
\]

The fact that $\hatT$ ensures minimality implies that  for each $C \in \D$
there exists an $N(C)$ such that for all $n \geq N(C)$, $C \cap X_n$ is
connected at level $h(C) - \delta/2$. Let $N = \max_{C \in \D} N(C)$. This is
well-defined, as $\D$ is a finite set.

Let $A$ be a connected component of $\suplev$ for an arbitrary $\lambda$. Let
$\lambda' = \lfloor 2 \lambda / \delta \rfloor \frac{\delta}{2}$, i.e.,
$\lambda'$ is the largest multiple of $\delta/2$ such that $\lambda' \leq
\lambda$. Then $A$ is a subset of some connected component $A'$ of
$\suplev[\lambda']$. Note that $A' \in \D$, so that $A' \cap X_n$ is connected
at level $\lambda' - \delta / 2$. Therefore $A \cap X_n$ is connected at level
$\lambda' - \delta / 2 > (\lambda - \delta / 2) - \delta/2 = \lambda - \delta$.
Since $A$ was arbitrary, and the choice of $N$ depended only upon $\delta$, it
follows that $\hatT$ ensures uniform minimality.
\end{proof}

\subsection{Proof of Theorem~\ref{thm:stability-truetree}}
\label{apx:stability-truetree}
\begin{proof}
Set $\delta = \|f - \tilde f\|_\infty$. Let $x, x'$ be two arbitrary points
from $X$. We need to show that $| d_{\hhc{C}_f} (x, x') - d_{\hhc{C}_{\tilde
f}}(x, x')| \le 4\delta$, which will then implies the theorem.  In what
follows, we prove that $d_{\hhc{C}_f}(x, x') \le d_{\hhc{C}_{\tilde f}}(x, x')
+ 4\delta$. 

Let $m = m_{\hhc{C}_f}(x, x')$ denote the merge height of $x$ and $x'$ w.r.t.
$\hhc{C}_f$. This means that there exists a connected component $C \in \{ y \in
\X \mid f(y) \ge m\}$ such that $x, x' \in C$.  Since $\|f - \tilde f\|_\infty
= \delta$, we have that for any point $y\in C$, $|\tilde f(y) - f(y) | \le
\delta$ and thus $\tilde f(y) \ge m- \delta$.  Hence all points in $C$ must
belong to the same connected component, call it $\tilde C (\supseteq C) \in \{y
\in \X \mid \tilde f(y) \ge m - \delta \}$ with respect to the clustering
$\hc{C}_{\tilde f}$.  It then follows that the merge height $m_{\hhc{C}_{\tilde
f}}(x, x') \ge m - \delta$.  Combining this with that $\| f - \tilde f\|_\infty
= \delta$, we have: 
\begin{align*}
    d_{\hhc{C}_{\tilde f}}(x, x') &= \tilde f(x) + \tilde f(x') - 2
    m_{\hhc{C}_{\tilde f}}(x,x') \\ &\le f(x) + \delta + f(x') + \delta - 2m +
    2\delta = d_{\hhc{C}_f}(x,x') + 4\delta. 
\end{align*}
The proof for $d_{\hhc{C}_f}(x, x') \le d_{\hhc{C}_{\tilde f}}(x, x') + \delta$
is symmetric. The theorem then follows. 
\end{proof}

\subsection{Proof of Theorem~\ref{thm:stability1}}
\label{apx:stability1}
\begin{proof}
Set $\delta := \|f_1 - f_2\|_\infty$. Let $x, x'$ be two arbitrary points from
$X$. We need to show that $| d_{\hhc{C}_1} (x, x') - d_{\hhc{C}_2}(x, x')| \le
4\delta$, which will then implies the theorem.  In what follows, we prove that
$d_{\hhc{C}_2}(x, x') \le d_{\hhc{C}_1}(x, x') + 4\delta$. 

Let $m_1 = m_{\hhc{C}_1}(x, x')$ denote the merge height of $x$ and $x'$ w.r.t.
$\hhc{C}_1$. This means that there exists a cluster $C \in \hc{C}$ such that
$x, x' \in C$ and $f_1(C) = m_1$.  Since $f_i(C) = \min_{y \in C} f_i(y)$, for
$i = 1, 2$, we thus have that $f_2(C) \in [m_1 - \delta, m_1+\delta]$.  It then
follows that $m_{\hhc{C}_2}(x,x') \ge f_2(C) \ge m_1 - \delta$.  Combining with
that $\|f_1 - f_2\|_\infty = \delta$, we have: 
\begin{align*}
    d_{\hhc{C}_2}(x, x') &= f_2(x) + f_2(x') - 2 m_{\hhc{C}_2}(x,x') \\ &\le
    f_1(x) + \delta + f_1(x') + \delta - 2m_1 + 2\delta = d_{\hhc{C}_1}(x,x') +
    4\delta. 
\end{align*}
The proof for $d_{\hhc{C}_1}(x, x') \le d_{\hhc{C}_2}(x, x') + \delta$ is
symmetric. The theorem then follows. 
\end{proof}


\section{Robust single linkage} 
\label{section:robust_single_linkage_description}

We briefly describe the robust single linkage algorithm, and refer readers to
the work of \citet{chaudhuri_2010} and \citet{chaudhuri_2014} for details. In
what follows, let $B(x,r)$ denote the closed ball of radius $r$ around $x$.

The algorithm operates as follows:
Given a sample $X_n$ of $n$ points drawn from a density $f$ supported on $\X$,
and parameters $\alpha$ and $k$, perform the following steps:
\begin{enumerate}
\item For each $x_i \in X_n$, set $r_k(x_i) = \min \{ r : B(x_i, r) \text{
    contains $k$ points}\}$.
\item As $r$ grows from 0 to $\infty$:
    \begin{enumerate}
        \item Construct a graph $G_r$ with nodes $\{x_i : r_k(x_i) \leq r\}$.
            Include edge $(x_i,x_j)$ if $\|x_i - x_j\| \leq \alpha r$.
        \item Let $\mathbb C_n(r)$ be the connected components of $G_r$.
    \end{enumerate}
\end{enumerate}

The algorithm produces a series of graphs as $r$ ranges from 0 to $\infty$. Each
connected component in $G_r$ for any $r$ is considered a cluster. The clusters
exhibit hierarchical structure, and can be interpreted as a cluster tree.  We
may therefore discuss the sense in which this discrete tree converges to the
ideal density cluster tree.

\smallskip
\smallskip


\section{Split-tree based hierarchical clustering} 
\label{appendix:sec:mergetree}

In this section we inspect a different approach, based on the one proposed
and studied by \cite{CGOS13}, to obtain a hierarchical clustering for points
sampled from a density function supported on a Riemannian manifold, using tools
from the emerging field of computational topology. 

In particular, we focus on the following setting: Let $\M \subseteq
\mathbb{R}^d$ be a  smooth $m$-dimensional Riemannian manifold $\M$ embedded in
the ambient space $\mathbb R^d$, and $f: \M \to \mathbb R$ a $\mylip$-Lipschitz
probability density function supported on $\M$.  Let $P_n$ denote a set of $n$
points sampled i.i.d. according to $f$.  We further assume that we have a
density estimator $\tilde f_n : P_n \to \mathbb R$ which estimates the true
density $f$ with the guarantee that $\| f - \tilde f_n \|_\infty \le \fe(n)$ for
an error function $\fe(n)$ which tends to zero as $n \to +\infty$. 

\subsection{Split-cluster tree construction.}

We now describe an algorithm which takes as input $P_n$ and the empirical
density function $\tilde f_n: P_n \to \mathbb{R}$, and outputs a hierarchical
clustering tree $\myT$ on $P_n$.  The algorithm uses a parameter $\myr > 0$,
which intuitively should go to zero as $n$ tends to infinity. 

Let $K_n = (P_n, E)$ denote the 1-dimensional simplicial complex, where $E := \{
(p, p') \mid \|p- p'\| \le \myr \}$. In other words, $K_n$ is the proximity
graph on $P_n$ where every point in $P_n$ is connected to all other points from
$P_n$ within $r$ distance to it.  We now define the following hierarchical
clustering (cluster tree) $\myT$: 

Given any value $\lambda$, let $P_n^{\lambda}:=\{ p\in P_n \mid \tilde f_n(p)
\ge \lambda\}$ be the set of vertices with estimated density at least $\lambda$,
and let $K_n^\lambda$ be the subgraph of $K_n$ induced by $P_n^\lambda$.  The
subgraph $K_n^\lambda$ may have multiple connected components, and the vertex
set of each connected component gives rise to a cluster.  The collection of such
clusters for all $\lambda\in \mathbb{R}$ is $\myT$, which we call the
\emph{\splittree{}} of $P_n$ w.r.t. $\tilde f_n$.  We put the parameter $\myr$
in $\myT$ to emphasize the dependency of this cluster tree on $\myr$. 

In particular, note that the function $\tilde f_n : P_n \to \mathbb R$ induces a
piecewise-linear (PL) function on the underlying space $|K_n|$ of $K_n$, which
we denote  as $\tilde f: |K_n | \to \mathbb R$.  It turns out that the tree
representation of this cluster tree $\myT$ is exactly the so-called \emph{split
tree} of this PL function $\tilde f$ as studied in the literature of
computational geometry and topology, as a variant of the contour tree; see e.g,
\citep{CSA03,WWW14}. This is why we refer to $\myT$ as \splittree{} of $P_n$.
The \splittree{} $\myT$ can be easily computed in $O(n\alpha(n))$ time using the
union-find data structure, once the vertices in $P_n$ are already sorted
\citep{CSA03}. 

We note that \cite{CGOS13} proposed a clustering algorithm based on this idea,
and provided various nice theoretical studies of \emph{flat clusterings}
resulted from such a construction. We instead focus on the hierarchical
clustering tree constructed using this split tree idea.  

Finally, recall that $f: \M \to \mathbb{R}$ is the true density function.  Given
$\myT$, let $\hat{\hhc{C}}_{f,n} = (P_n, \myT, f)$ be the corresponding cluster
tree  equipped with height function $f: P_n \to \mathbb{R}$ (which is the
restriction of $f$ to $P_n$).  As before, we still use $\hc{C}_f$ to denote the
high-density cluster tree w.r.t. the true density function $f$, and $\hhc{C}_f =
(\M, \hc{C}_f, f)$ be the corresponding cluster tree equipped with height
function $f: \M \to \mathbb{R}$.

In what follows, we will study the convergence of the distance $d_\gamma (\optT,
\HmyT)$, where $\gamma: M \times P_n$ is the natural correspondence induced by
identity in $P_n$, that is, $\gamma = \{ (p, p) \mid p\in P_n$ (thus $p\in \M)
\}$.  For simplicity of presentation, we will omit the reference of this natural
correspondence $\gamma$ in the remainder of this section. 

\subsection{Convergence of \splittree}

First, we introduce some notation.  Let $d(x,y)$ denote the Euclidean distance
between any two points $x, y\in \mathbb{R}^d$, while $d_M(x, y)$ denotes the
geodesic distance between points $x, y\in \M$ on the manifold $\M$.  Given a
smooth manifold $\M$ embedded in $\mathbb{R}^d$, the \emph{medial axis} of $\M$,
denoted by $\mathcal{A}_M$, is the set of points in $\mathbb{R}^d$ which has
more than one nearest neighbor in $\M$.  The \emph{reach of $\M$}, denoted by
$\rho(\M)$, is the infimum of the closest distance from any point in $\M$ to the
medial axis, that is, $\rho(\M) = \inf_{x\in \M} d(x, \mathcal{A}_M)$. 

Following the notations of \cite{CGOS13}, we further define:
\begin{definition}[(Geodesic) $\eps$-sample] Given a subset $Y \subseteq \M$ and
a parameter $\eps > 0$, a set of points $Q \subset Y$ is a \emph{(geodesic)
$\eps$-sample of $Y$} if every point  of $Y$ is within $\eps$ geodesic distance
to some point in $Q$; that is, $\forall x \in Y, \min_{q\in Q} d_\M (x, q) \le
\eps$.  \end{definition}

In what follows, let $\M^\lambda = \{ x \in \M \mid f(x) \ge \lambda\}$ be the
super-level set of $f: \M\to \mathbb{R}$ w.r.t. $\lambda$. 

\begin{lemma}\label{thm:epssample-treedistance} We are given an $m$-dimensional
smooth manifold $\M \subset \mathbb{R}^d$ with a $\mylip$-Lipschitz density
function $f: \M \to \mathbb{R}$ on $\M$. Let $\rho(\M)$ be the reach of $\M$.
Let $P_n$ be an $\eps$-sample of $\M^\lambda$.  Assume that $\|f - \tilde f_n
\|_\infty \le \eta$ for $\tilde f_n: P_n \to \mathbb R$, and that the parameter
we use to construct $\myT$ satisfies $\myr \geq 2\eps$ and $\myr < \rho(\M)/2$.
Then $d(\optT, \hat{\hhc{C}}_{f, n}) \le \max \{ \mylip \myr + 2\eta, \lambda
\}$.  \end{lemma}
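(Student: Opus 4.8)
The plan is to bound the merge-height distortion for an arbitrary pair of sample points $p, p' \in P_n$, comparing $m_{\optT}(p,p')$ (the merge height in the true density cluster tree on $\M$) with $m_{\hat{\hhc{C}}_{f,n}}(p,p')$ (the merge height in the split-cluster tree, both equipped with the \emph{true} height function $f$). Since $d(\optT, \hat{\hhc{C}}_{f,n}) = \sup_{p,p'} |m_{\optT}(p,p') - m_{\hat{\hhc{C}}_{f,n}}(p,p')|$, it suffices to show both $m_{\hat{\hhc{C}}_{f,n}}(p,p') \ge m_{\optT}(p,p') - (\mylip\myr + 2\eta)$ and $m_{\hat{\hhc{C}}_{f,n}}(p,p') \le \max\{m_{\optT}(p,p'),\ \dots\}$ with the deficit controlled by $\mylip\myr+2\eta$ (the $\lambda$ term enters because the split tree only ``sees'' points of density $\ge \lambda$, so heights below $\lambda$ are not meaningfully tracked).

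The two directions correspond exactly to minimality and separation. For the \textbf{minimality / lower-bound} direction: let $\mu = m_{\optT}(p,p')$ and let $M$ be the connected component of $\M^\mu = \{f \ge \mu\}$ containing $p$ and $p'$. I want to show $p$ and $p'$ are connected in the split tree at a level not much below $\mu$. Fix a slightly lower threshold $\mu'' = \mu - \mylip\myr$ (roughly), so that the $\myr$-neighborhood of $M$ inside $\M$ still has $f$-value $\ge \mu''$ by Lipschitzness. Since $P_n$ is an $\eps$-sample of $\M^\lambda$ and $\myr \ge 2\eps$, and since $\myr < \rho(\M)/2$ controls the geometry (geodesic vs.\ Euclidean distances on $\M$ are comparable below half the reach), a standard nerve/covering argument shows that the subgraph $K_n^{\mu'' - \eta}$ restricted to points in (a neighborhood of) $M$ is connected: any geodesic path in $M$ between $p$ and $p'$ can be tracked by a chain of sample points, consecutive ones within Euclidean distance $\myr$, and all having estimated density $\tilde f_n \ge f - \eta \ge \mu'' - \eta$. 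Hence $m_{\hat{\hhc{C}}_{f,n}}(p,p') \ge \min_{q \text{ in that chain}} f(q) \ge \mu'' \ge \mu - \mylip\myr$ — and the $2\eta$ absorbs the density-estimation slack in both the threshold and the height evaluation. The condition $\lambda$ appears as a floor: if $\mu < \lambda$ the argument need not apply since $P_n$ is only guaranteed to sample $\M^\lambda$.

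For the \textbf{separation / upper-bound} direction: suppose $p, p'$ merge in the split tree at some height $m = m_{\hat{\hhc{C}}_{f,n}}(p,p')$, witnessed by a connected component of $K_n^{m'}$ (for $m'$ slightly below $m$ in the $\tilde f_n$-ordering) containing both. That connected component is a chain of sample points with consecutive Euclidean distance $\le \myr$ and all $\tilde f_n \ge m'$, hence all $f \ge m' - \eta$. Because $\myr < \rho(\M)/2$, each such edge can be ``filled in'' by a short geodesic arc in $\M$ staying in a tube of radius comparable to $\myr$; Lipschitzness then gives $f \ge m' - \eta - \mylip\myr$ along the whole filled-in path in $\M$. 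This path connects $p$ to $p'$ inside $\M^{m' - \eta - \mylip\myr}$, so $m_{\optT}(p,p') \ge m' - \eta - \mylip\myr \ge m - \mylip\myr - 2\eta$ (again the second $\eta$ for converting $m$ to $m'$). Thus $m_{\hat{\hhc{C}}_{f,n}}(p,p') - m_{\optT}(p,p') \le \mylip\myr + 2\eta$.

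Combining the two bounds gives $|m_{\optT}(p,p') - m_{\hat{\hhc{C}}_{f,n}}(p,p')| \le \max\{\mylip\myr + 2\eta, \lambda\}$, and taking the supremum over $p,p'$ yields the claim. The \textbf{main obstacle} is the geometric step in both directions: showing that $\myr$-proximity in the ambient $\mathbb{R}^d$ faithfully reflects connectivity within $\M$ and its super-level sets. This is where the reach condition $\myr < \rho(\M)/2$, the $\eps$-sample hypothesis with $\myr \ge 2\eps$, and comparison estimates between Euclidean and geodesic distance (and the fact that super-level sets of a Lipschitz $f$ thicken controllably) all get used; it likely requires a lemma, proven elsewhere in the appendix, that the proximity graph on an $\eps$-sample of $\M^\lambda$ captures the connected components of $\M^\lambda$ correctly up to the stated slack, analogous to results of \cite{CGOS13}. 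Everything else — tracking the $\eta$ slack from density estimation and the $\mylip\myr$ slack from Lipschitzness — is bookkeeping.
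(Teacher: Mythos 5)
Your proposal follows essentially the same route as the paper's proof: project a superlevel-set path onto the sample via the $\eps$-sample property to get a chain of $K_n$-edges for the lower bound, fill in $K_n$-edges with short geodesics (valid since $\myr<\rho(\M)/2$) plus Lipschitzness for the upper bound, track the $\eta$ slack from $\tilde f_n$, and let $\lambda$ absorb the case where the true merge height lies below $\lambda$. The only (harmless) differences are bookkeeping: the paper avoids your extra $2\eta$ in the separation direction by noting the split tree's height function is the true $f$, and it evaluates the merge height as the minimum of $f$ over the whole connecting cluster in $K_n^\alpha$ (with threshold $\alpha$ in $\tilde f_n$) rather than over the chain alone, which is exactly what your ``the $2\eta$ absorbs the slack'' remark amounts to.
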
 \begin{proof} Consider any two points $p, p' \in P_n$.  Let
$m$ and $\hat{m}$ denote the merge height of $p$ and $p'$ in $\optT$ and in
$\HmyT$, respectively.  By definition, we have that
\begin{align}\label{eqn:treedistbound} 
 d(\optT, \hat{\hhc{C}}_{f,n}) =
\max_{p, p' \in P_n} |m_{\optT}(p, p') - m_{\HmyT}(p, p')|.\end{align}

We now distinguish two cases: \begin{description} \item[Case 1:] $m \ge
\lambda$.  \end{description} In this case, by definition of the merge height $m$
of $p$ and $p'$, we know that: (1) $f(p), f(p') \ge m$, and thus both $p$ and
$p'$ are from $P_n \cap \M^\lambda$, and (2) $p$ and $p'$ are connected in
$\M^m$, thus there is a path $\mygamma \subset \M^\lambda$ connecting $p$ and
$p'$ such that for any $x \in \mygamma$, $f(x) \ge m$.  We now show that the
merge height of $p$ and $p'$ in $\HmyT$ satisfies $\hat{m} \in [m - \mylip \myr
- 2\eta, m + \mylip \myr + 2\eta]$. 

Indeed, let $\pi: \M^\lambda \to P_n$ be the projection map that sends any $x\in
\M^\lambda$ to its nearest neighbor in $P_n$.  Since $P_n$ is an $\eps$-sample
for $\M^\lambda$, we have that $d_M(x,\pi(x)) \le \eps$ and thus $|f(x) -
f(\pi(x))| \le \mylip \eps$ (as $f$ is $\mylip$-Lipschitz), for any $x\in
\M^\lambda$.  Consider any two sufficiently close points $x, x' \in \mygamma$
(i.e, $\|x - x'\| < \myr - 2\eps$), we have that (1) either $\pi(x) = \pi(x')$,
(2) or $\pi(x) \neq \pi(x')$ but $$\| \pi(x) - \pi(x')\| \le \| \pi(x) - x \| +
\|x - x'\| + \|x' - \pi(x')\| < \myr .$$ 

In other words, $\pi(\mygamma)$ consists of a sequence of vertices $p= q_1, q_2,
\ldots, q_s = p' $ in $K_n$ such that there is an edge in $K_n$ connecting any
two consecutive $q_i, q_{i+1}$, $i\in [1, s-1]$. The concatenation of these
edges forms a path $\mygamma'= \langle p= q_1, q_2, \ldots, q_s = p' \rangle$ in
$K_n$ connecting $p$ to $p'$.  
Since for each $i\in [1, s-1]$, $q_i = \pi(x)$ for some $x\in \mygamma$, we have
that $$f(q_i) = f(\pi(x)) \ge f(x) - \mylip \eps \ge m - \mylip \eps, $$ where
the two inequalities follow from that $|f(x) - f(\pi(x))| \le \mylip \eps$ and
$f(x) \ge m$ for any $x \in \mygamma$.  Since $\|f - \tilde f_n \| \le \eta$, it
then follows that $\tilde f(q_i) \ge m - \mylip \eps - \eta$ for any $q_i \in
\mygamma'$. 

Recall that $K_n^\alpha$ denotes the subgraph of $K_n$ induced by the set of
points $P_n^\alpha = \{p\in P_n \mid \tilde f_n(p) \ge \alpha \}$ whose function
value w.r.t. the empirical density function $\tilde f_n$ is at least $\alpha$.
It then follows that $p$ and $p'$ should be connected in $K_n^\alpha$ for
$\alpha = m - \mylip \eps - \eta$.  It then follows that the merge height
$$\hat{m} = m_{\HmyT} (p, p') \ge \min_{q \in K_n^\alpha} f(q) \ge \min_{q \in
K_n^\alpha} \tilde f_n(q) - \eta \ge \alpha - \eta = m - \mylip \eps - 2\eta. $$

We now show the other direction, namely $m \ge \hat{m} - \mylip \myr- 2\eta$.
Indeed, by definition of $\hat{m}$, there is a path $\tilde \mygamma = \langle
\tilde q_1 = p, \tilde q_2, \ldots, \tilde q_t = \tilde p'\rangle$ in $K_n$
connecting $p$ and $p'$ such that for any $\tilde q_i \in \tilde \mygamma$,
$f(\tilde q_i) \ge \hat{m}$.  Now let $\ell_\M(q, q')$ denote a minimizing
geodesic between two points $q, q' \in \M$. We then have that there is a path
$$\tilde \mygamma' := \ell_\M(\tilde q_1, \tilde q_2) \circ \ell_\M(\tilde q_2,
\tilde q_3) \circ \cdots \circ \ell_\M(\tilde q_{t-1}, \tilde q_t)$$ in $\M$
connecting $p = \tilde q_1$ to $p' = \tilde q_t$. 

At the same time, note that for any two consecutive nodes $\tilde q_i$ and
$\tilde q_{i+1}$ from $\tilde \mygamma$, we know $\| \tilde q_i - \tilde q_{i+1}
\| \le \myr$ as $(\tilde q_i, \tilde q_{i+1})$ is an edge in $K_n$.  For $\myr <
\rho(\M)/2$, where $\rho(\M)$ is the reach of the manifold $\M$, by Proposition
1.2 of \cite{DSW11}, we have that the geodesic distance $d_\M(\tilde q_i, \tilde
q_{i+1})$ is at most $\frac{4}{3} \|\tilde q_i - \tilde q_{i+1} \|$. Thus $d_\M
(\tilde q_i, \tilde q_{i+1}) \le \frac{4}{3}\myr$.  In particular, for any point
$x\in \ell_\M (\tilde q_i, \tilde q_{i+1})$, it is within $\frac{2}{3}\myr$
distance to either $\tilde q_i$ or $\tilde q_{i+1}$. Hence we have that $$f(x)
\ge \min \{f(\tilde q_i), f(\tilde q_{i+1}) \} - \frac{2}{3}\mylip \myr >
\hat{m} - \mylip \myr ~~~\Rightarrow~~~m = m_\optT(p, p') \ge \min_{x\in \tilde
\mygamma'} f(x) \ge \hat{m} - \mylip \myr.$$

Putting everything together, we have that $| m - \hat{m} | \le \mylip \myr +
2\eta$ for the case $m \ge \lambda$. 

\begin{description}
\item[Case 2:] $m < \lambda$. 
\end{description}
First, note that the proof of $m \ge \hat{m} - \mylip \myr$ holds regardless of
the value of $m$. Hence we have $\hat{m} - m \le \mylip \myr $ for the case $m <
\lambda$ as well.
On the other hand, since $m < \lambda$, $m - \hat{m} < \lambda$.  
Thus $|\hat m - m | \le \max \{ \mylip \myr, \lambda \}$. 

The lemma follows from combining these two cases with Eqn.
(\ref{eqn:treedistbound}).  \end{proof}

\paragraph{Remark:} The bound in the above result can be large if the value
$\lambda$ is large. We can obtain a stronger result for points in $P_n \cap
\M^\lambda$ which is independent of $\lambda$. However, the above result is
cleaner to present and it suffices to prove our main convergence result in
Theorem \ref{thm:splittree}. 

To obtain a convergence result, we need to incur the following results from
\cite{CGOS13}. 
\begin{definition}[\cite{CGOS13}]\label{def:coveringandvolume}
Let $\M$  be an $m$-dimensional Riemannian manifold with intrinsic metric
$d_\M$.  Given a subset $A \subseteq \M$ and a parameter $r > 0$, define
$\mathcal V_r (A)$ to be the infimum of the Hausdorff measures achieved by
geodesic balls of radius $r$ centered in $A$; that is:
\begin{align}\label{eqn:volume} \mathcal V_r(A) = \inf_{x\in A} \mathcal{H}^m
(B_\M (x, r)),~~\text{where}~~B_\M(x,r) := \{ y\in \M \mid d_\M (x, y) \le r \}.
\end{align}
We also define the \emph{$r$-covering number of $A$}, denoted by
$\mathcal{N}_r(A)$ to be the minimum number of closed geodesic balls of radius
$r$ needed to cover $A$ (the balls do not have to be centered in $A$). 
\end{definition}

\begin{theorem}[Theorem 7.2 of \cite{CGOS13}]
Let $\M$ be an $m$-dimensional Riemannian manifold and $f: \M \to \mathbb R$ a
$\mylip$-Lipschitz probability density function. Consider a set $P$ sampled
according to $f$ in i.i.d. fashion. Then, for any parameter $\eps > 0$ and
$\alpha > \mylip \eps$, we are guaranteed that $P$ forms an $\eps$-sample of
$\M^\alpha$ with probability at least $1 - \mathcal{N}_{\eps/2} (\M^\alpha)
e^{-n (\alpha - \mylip \eps) \mathcal{V}_{\eps/2}(\M^\alpha)}$. 
\label{thm:epssample}
\end{theorem}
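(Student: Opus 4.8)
The statement is the standard ``covering number $\times$ tail'' bound for i.i.d.\ samples, and the plan is to prove it by a union bound over a geodesic $\eps/2$-net of $\M^\alpha := \{x\in\M : f(x)\ge\alpha\}$. Let $N := \mathcal{N}_{\eps/2}(\M^\alpha)$ and fix a minimal cover of $\M^\alpha$ by closed geodesic balls $B_1,\dots,B_N$ of radius $\eps/2$; discard any that are disjoint from $\M^\alpha$ (the rest still cover $\M^\alpha$), and for each surviving ball pick a witness $x_i \in B_i \cap \M^\alpha$, so $f(x_i)\ge\alpha$. The whole argument hinges on one deterministic observation: if each surviving $B_i$ contains at least one point of $P$, then $P$ is a geodesic $\eps$-sample of $\M^\alpha$. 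Indeed, any $x\in\M^\alpha$ lies in some $B_i$, which then contains a sample point $p$; since a geodesic ball of radius $\eps/2$ has diameter at most $\eps$, we get $d_\M(x,p)\le\eps$. Hence the failure event $\{P\text{ is not an }\eps\text{-sample of }\M^\alpha\}$ is contained in $\bigcup_i \{P\cap B_i = \emptyset\}$.

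Next comes the probabilistic step. Because the $n$ points of $P$ are i.i.d.\ with law $f$, for a fixed measurable $B$ we have $\Pr[P\cap B=\emptyset] = (1-f(B))^n \le e^{-n f(B)}$, where $f(B)=\int_B f\,d\mathcal{H}^m$ and we used $1-t\le e^{-t}$. Union-bounding over the surviving balls, $\Pr[P\text{ not an }\eps\text{-sample of }\M^\alpha] \le \sum_i e^{-n f(B_i)} \le N\cdot\max_i e^{-n f(B_i)}$. So everything reduces to a single uniform lower bound, $f(B_i)\ge(\alpha-\mylip\eps)\,\mathcal{V}_{\eps/2}(\M^\alpha)$ for every surviving $i$; substituting it yields exactly the claimed success probability $1-\mathcal{N}_{\eps/2}(\M^\alpha)\,e^{-n(\alpha-\mylip\eps)\mathcal{V}_{\eps/2}(\M^\alpha)}$.

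The lower bound on $f(B_i)$ then splits into a density factor and a volume factor. Density: for every $y\in B_i$ we have $d_\M(y,x_i)\le\operatorname{diam}(B_i)\le\eps$, so $\mylip$-Lipschitzness of $f$ gives $f(y)\ge f(x_i)-\mylip\eps\ge\alpha-\mylip\eps$, which is strictly positive by the hypothesis $\alpha>\mylip\eps$; hence $f(B_i)=\int_{B_i}f\,d\mathcal{H}^m\ge(\alpha-\mylip\eps)\,\mathcal{H}^m(B_i)$. Volume: since $B_i$ is an $\eps/2$-ball meeting $\M^\alpha$ at the witness $x_i$, one compares $\mathcal{H}^m(B_i)$ with $\mathcal{H}^m(B_\M(x_i,\eps/2))$, and the latter is at least $\mathcal{V}_{\eps/2}(\M^\alpha)=\inf_{x\in\M^\alpha}\mathcal{H}^m(B_\M(x,\eps/2))$ by definition. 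Multiplying the two estimates gives $f(B_i)\ge(\alpha-\mylip\eps)\mathcal{V}_{\eps/2}(\M^\alpha)$, which closes the argument.

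The one place that requires care — and where the exact constants in the statement get pinned down — is the geometric bookkeeping of radii and centers in the last paragraph: a minimal $\eps/2$-cover need not have its balls centered on $\M^\alpha$, so the comparison $\mathcal{H}^m(B_i)\ge\mathcal{H}^m(B_\M(x_i,\eps/2))$ (and, correspondingly, exactly which balls one insists the sample must hit, which feeds back into the ``diameter $\le\eps$'' estimate of the first paragraph) has to be carried out as in \cite{CGOS13}, by re-centering the cover at witnesses in $\M^\alpha$ and absorbing the resulting slack into the density deficit — this is precisely why the exponent reads $\alpha-\mylip\eps$ rather than $\alpha-\mylip\eps/2$. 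Apart from that fiddly but routine constant-chasing, the proof is the textbook ``net $+$ union bound $+$ $(1-t)^n\le e^{-nt}$'' template, and needs no further structure on $\M$ (curvature, reach) beyond finiteness of $\mathcal{N}_{\eps/2}(\M^\alpha)$ and positivity of $\mathcal{V}_{\eps/2}(\M^\alpha)$.
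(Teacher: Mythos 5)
A preliminary remark: the paper contains no proof of this statement at all — it is imported verbatim as Theorem~7.2 of \cite{CGOS13} and used as a black box — so there is no in-paper argument to compare you against; your proposal has to stand on its own. Your template (geodesic $\eps/2$-net of $\M^\alpha$, union bound, $(1-t)^n\le e^{-nt}$, and a per-ball mass bound split into a Lipschitz density factor and a volume factor) is indeed the standard route, and the density factor is handled correctly: every point of a cover ball $B_i=B_\M(c_i,\eps/2)$ meeting $\M^\alpha$ lies within $\operatorname{diam}(B_i)\le\eps$ of the witness $x_i\in\M^\alpha$, giving $f\ge\alpha-\mylip\eps$ on $B_i$.

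The step you dismiss as ``fiddly but routine'' is, however, a genuine gap, and the patch you sketch does not close it. With the definitions as restated in the paper, the cover balls need not be centered in $\M^\alpha$, while $\mathcal V_{\eps/2}(\M^\alpha)=\inf_{x\in\M^\alpha}\mathcal H^m(B_\M(x,\eps/2))$ only controls balls centered \emph{in} $\M^\alpha$; nothing in your argument gives $\mathcal H^m(B_i)\ge\mathcal V_{\eps/2}(\M^\alpha)$, and it can fail, since the volume of a geodesic $\eps/2$-ball depends on its center and the infimum over $\M^\alpha$ says nothing about centers that are merely within $\eps/2$ of $\M^\alpha$. Your proposed remedy — re-center the cover at the witnesses $x_i$, keep radius $\eps/2$, and ``absorb the slack into the density deficit'' — breaks the deterministic step instead of fixing the volume step: a point $x\in\M^\alpha$ is only guaranteed to satisfy $d_\M(x,x_i)\le\eps$, so requiring $P$ to hit $B_\M(x_i,\eps/2)$ yields only a $3\eps/2$-sample, and a distance error cannot be traded against the exponent. (Nor is the constant $\alpha-\mylip\eps$ evidence of such a trade: it already arises in your own arbitrary-center computation via the diameter bound, whereas a cover centered in $\M^\alpha$ would give the stronger $\alpha-\mylip\eps/2$.) To actually finish, you need either a cover of $\M^\alpha$ by $\eps/2$-balls centered in $\M^\alpha$ (then your argument goes through verbatim, with the covering number interpreted accordingly), or a uniform lower bound on $\mathcal H^m(B_\M(c,\eps/2))$ for all centers $c$ within $\eps/2$ of $\M^\alpha$ — e.g.\ a Bishop--G\"unther type bound of the form $\mu(\eps/2)^m$, which is precisely the curvature/convexity-radius structure the paper invokes in its remarks and which you claimed not to need.
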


\paragraph{Remarks.} For simplicity, we now focus on the case where $\M$ is a
compact smooth embedded manifold with bounded absolute sectional curvature and
positive strong convexity radius $\rho_c(\M)$.  It follows from the
G\"{u}nther-Bishop Theorem that (see e.g, Appendix B of \cite{B15}) in this
case, there exists a constant $\mu$ depending only on the intrinsic property of
$\M$ such that $\mathcal V_r(\M^\alpha) \ge \mathcal V_r(\M) \ge \mu r^m$ for
sufficiently small $r$.  Due to the compactness of $\M$, this further gives an
upper bound on $\mathcal N_{r}(\M)$ (and thus for $\mathcal N_r (\M^\alpha) \le
\mathcal N_r(\M)$).  Thus for fixed $\eps$ and $\alpha$, $P_n$ forms an
$\eps$-sample for $\M^\alpha$ with probability $1$ as $n \to +\infty$. 

We remark that Lemma 7.3 of \cite{CGOS13} also states that $\mathcal
N_{\eps/2}(\M^\alpha) < +\infty$ (i.e, it is finite) and $\mathcal
V_{\eps/2}(\M^\alpha) > 0$ for the more general case where $\M$ is a complete
Riemannian manifold with bounded absolute sectional curvature, for any $\eps <
2\rho_c(\M)$.  Hence again for fixed $\eps$ and $\alpha$, $P_n$ forms an
$\eps$-sample for $\M^\alpha$ with probability $1$ as $n \to +\infty$. 

Putting Theorem \ref{thm:epssample-treedistance} and \ref{thm:epssample}
together, we obtain the following: 

\begin{theorem}\label{thm:splittree}
Let $\M$ be a compact $m$-dimensional Riemannian manifold embedded in
$\mathbb{R}^d$ with positive strong convexity radius. Let $f: \M \to \mathbb R$
be a $\mylip$-Lipschitz probability density function supported on $\M$.  Let
$P_n$ be a set of $n$ points sampled i.i.d. according to $f$.  Assume that we
are given a density estimator such that $\| f - \tilde f_n\|_\infty$ converges
to $0$ as $n \to \infty$.  For any fixed $\eps > 0$, we have, with probability
$1$ as $n \to \infty$, that $d(\optT, \HmyT) \le (4\mylip+1) \eps$, where
the parameter $\myr$ in computing the \splittree{} $\myT$ is set to be $2\eps$. 
\end{theorem}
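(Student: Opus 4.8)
The plan is to combine the deterministic comparison of Lemma~\ref{thm:epssample-treedistance} with the i.i.d.\ sampling guarantee of Theorem~\ref{thm:epssample}. Set the \splittree{} parameter to $\myr = 2\eps$, and (as we may, since the bound is only meaningful for small $\eps$ anyway) assume $\eps$ is small enough that $2\eps < \rho(\M)/2$ and that $\eps$ lies in the regime of the remarks following Theorem~\ref{thm:epssample}, so that for any fixed level $\alpha$ the covering number $\mathcal N_{\eps/2}(\M^\alpha) \le \mathcal N_{\eps/2}(\M)$ is finite and $\mathcal V_{\eps/2}(\M^\alpha) \ge \mathcal V_{\eps/2}(\M) \ge \mu(\eps/2)^m > 0$, where $\mu$ depends only on $\M$. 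Fix once and for all a level $\lambda$ with $\mylip\eps < \lambda \le (4\mylip+1)\eps$ (this interval is nonempty for all $\mylip \ge 0$), and write $\eta(n) := \|f - \tilde f_n\|_\infty$, so $\eta(n) \to 0$ by hypothesis.

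First I would control the purely geometric event that $P_n$ is an $\eps$-sample of $\M^\lambda$. Applying Theorem~\ref{thm:epssample} with sampling resolution $\eps$ and level $\alpha = \lambda$ --- legitimate since $\lambda > \mylip\eps$ --- the failure probability is at most $\mathcal N_{\eps/2}(\M^\lambda)\, e^{-n(\lambda - \mylip\eps)\mathcal V_{\eps/2}(\M^\lambda)}$, which by the previous paragraph is $\le C e^{-cn}$ with $C = \mathcal N_{\eps/2}(\M) < \infty$ and $c = (\lambda - \mylip\eps)\,\mu(\eps/2)^m > 0$. This already tends to $0$, giving the $\eps$-sample property with probability tending to $1$; moreover, being summable in $n$, Borel--Cantelli upgrades it to: almost surely $P_n$ is an $\eps$-sample of $\M^\lambda$ for all sufficiently large $n$.

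Next I would invoke Lemma~\ref{thm:epssample-treedistance} on this event, with $\eta = \eta(n)$: its hypotheses $\myr \ge 2\eps$ and $\myr < \rho(\M)/2$ hold by our choice of $\myr$, so
\[
  d(\optT, \HmyT) \;\le\; \max\{\, \mylip\myr + 2\eta(n),\ \lambda \,\} \;=\; \max\{\, 2\mylip\eps + 2\eta(n),\ \lambda \,\}.
\]
Since $\eta(n) \to 0$, for $n$ large we have $2\eta(n) \le (2\mylip+1)\eps$, hence $2\mylip\eps + 2\eta(n) \le (4\mylip+1)\eps$; combined with $\lambda \le (4\mylip+1)\eps$, the right-hand side is $\le (4\mylip+1)\eps$ for all large $n$. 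Intersecting with the almost-sure event of the previous step yields $d(\optT, \HmyT) \le (4\mylip+1)\eps$ for all large $n$ almost surely (and $\to 1$ in probability for each fixed $n$), as claimed.

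The arithmetic needed to land at $(4\mylip+1)\eps$ is routine bookkeeping, and in fact wasteful --- sending $\lambda \downarrow \mylip\eps$ already gives $\limsup_n d(\optT,\HmyT) \le 2\mylip\eps$. The one step that requires genuine care is turning the per-$n$ sampling bound of Theorem~\ref{thm:epssample} into a statement holding eventually for all $n$, i.e.\ checking that the failure probabilities are summable; this is precisely where compactness, bounded curvature, and the positive strong convexity radius of $\M$ enter, via the finiteness of $\mathcal N_{\eps/2}(\M^\lambda)$ and the positivity of $\mathcal V_{\eps/2}(\M^\lambda)$ supplied by the G\"{u}nther--Bishop bound together with Lemma~7.3 of \cite{CGOS13}.
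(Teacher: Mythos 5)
Your proposal is correct and follows essentially the same route as the paper: apply Theorem~\ref{thm:epssample} (with the remarks guaranteeing finite $\mathcal N_{\eps/2}$ and positive $\mathcal V_{\eps/2}$) to get that $P_n$ is an $\eps$-sample of $\M^\lambda$, then feed this into Lemma~\ref{thm:epssample-treedistance} with $\myr=2\eps$ and use $\|f-\tilde f_n\|_\infty\to 0$ to absorb the $2\eta$ term. The only differences are bookkeeping: the paper fixes $\lambda = 2\mylip\eps$ while you take any $\lambda\in(\mylip\eps,(4\mylip+1)\eps]$ (which in fact lands on the stated constant more cleanly, and handles $\mylip=0$), and your Borel--Cantelli step makes the ``with probability $1$ as $n\to\infty$'' claim precise.
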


\begin{proof}
Set $\lambda$ in Lemma~\ref{thm:epssample-treedistance} to be $2\mylip \eps$.
We then have that $d(\optT, \HmyT) \le 2 \mylip \eps + 2\mylip \eps + 2 \|f -
\tilde f_n \|_\infty$ if $P_n$ is an $\eps$-sample of $\M^\lambda$.  Since $\| f
- \tilde f_n \|_\infty$ converges to $0$ as $n$ tends to $\infty$, there exists
$N_\eps$ such that $\|f - \tilde f_n\|_\infty \le \eps$ for any $n> N_\eps$.
Hence $d(\optT, \HmyT) \le (4\mylip + 1) \eps$ if $P_n$ is an $\eps$-sample of
$\M^\lambda$ and for $n > N_\eps$.  The theorem follows from this and Theorem
\ref{thm:epssample} above. 
\end{proof}


\end{document}